\begin{document}
\title{\huge Borda Regret Minimization for Generalized Linear Dueling Bandits}
\author
{
Yue Wu\thanks{
Department of Computer Science, 
University of California, Los Angeles, 
Los Angeles, 
CA 90095; 
e-mail: {\tt ywu@cs.ucla.edu}
} 
	~~
Tao Jin\thanks{
Department of Computer Science,
University of Virginia,
Charlottesville, 
VA 22903;
e-mail: {\tt taoj@virginia.edu}} 
	~~
Qiwei Di \thanks{
Department of Computer Science, 
University of California, Los Angeles, 
Los Angeles, 
CA 90095; 
} 
~~
Hao Lou\thanks{
Department of Electrical \& Computer Engineering,
University of Virginia,
Charlottesville, 
VA 22903;
e-mail: {\tt haolou@virginia.edu}} 
	~~
Farzad Farnoud\thanks{
Department of Electrical \& Computer Engineering,
University of Virginia,
Charlottesville, 
VA 22903;
e-mail: {\tt farzad@virginia.edu}} 
	~~
Quanquan Gu\thanks{Department of Computer Science, University of California, Los Angeles, Los Angeles, CA 90095; e-mail: {\tt qgu@cs.ucla.edu}}
}
\date{}

\maketitle
\begin{abstract}
    Dueling bandits are widely used to model preferential feedback prevalent in many applications such as recommendation systems and ranking. 
    In this paper, we study the Borda regret minimization problem for dueling bandits, which aims to identify the item with the highest Borda score while minimizing the cumulative regret.
    We propose a rich class of generalized linear dueling bandit models, which cover many existing models.
    We first prove a regret lower bound of order $\Omega(d^{2/3} T^{2/3})$ for the Borda regret minimization problem, where $d$ is the dimension of contextual vectors and $T$ is the time horizon.
    To attain this lower bound, we propose an explore-then-commit type algorithm for the stochastic setting, which has a nearly matching regret upper bound $\tilde{O}(d^{2/3} T^{2/3})$. 
    We also propose an EXP3-type algorithm for the adversarial linear setting, where the underlying model parameter can change at each round. Our algorithm achieves an $\tilde{O}(d^{2/3} T^{2/3})$ regret, which is also optimal.
    Empirical evaluations on both synthetic data and a simulated real-world environment are conducted to corroborate our theoretical analysis.
\end{abstract}

\section{Introduction}



Multi-armed bandits (MAB) \citep{Lattimore2020BanditA} is an interactive game where at each round, an agent chooses an arm to pull and receives a noisy reward as feedback. In contrast to numerical feedback considered in classic MAB settings, preferential feedback is more natural in various online learning tasks including information retrieval~\cite{yue2009interactively}, recommendation systems~\cite{sui2014clinical}, ranking~\cite{Minka2018TrueSkill2A}, crowdsourcing~\cite{chen2013pairwise}, etc. Moreover, numerical feedback is also more difficult to gauge and prone to errors in many real-world applications. For example, when provided with items to shop or movies to watch, it is more natural for a customer to pick a preferred one than scoring the options. This motivates
\emph{Dueling Bandits} \citep{yue2009interactively}, where the agent repeatedly pulls two arms at a time and is provided with feedback being the binary outcome of ``duels'' between the two arms. 

In dueling bandits problems, the outcome of duels is commonly modeled as Bernoulli random variables due to their binary nature. At each round, suppose the agent chooses to compare arm $i$ and $j$, then the binary feedback is assumed to be sampled independently from a Bernoulli distribution. For a dueling bandits instance with $K$ arms, the probabilistic model of the instance can be fully characterized by a $K \times K$ preference probability matrix with each entry being:
$
    p_{i,j} = \PP (\text{arm $i$ is chosen over arm $j$}).
$

In a broader range of applications such as ranking, ``arms'' are often referred to as ``items''. We will use these two terms interchangeably in the rest of this paper. 
One central goal of dueling bandits is to devise a strategy to identify the ``optimal'' item as quickly as possible, measured by either sample complexity or cumulative regret. However, the notion of optimality for dueling bandits is way harder to define than for multi-armed bandits. The latter can simply define the arm with the highest numerical feedback as the optimal arm, while for dueling bandits there is no obvious definition solely dependent on $\{p_{i,j} | i,j \in [K] \}$.

The first few works on dueling bandits imposed strong assumptions on $p_{i,j}$. For example, \citet{Yue2012TheKD} assumed that there exists a true ranking that is coherent among all items, and the preference probabilities must satisfy both strong stochastic transitivity (SST) and stochastic triangle inequality (STI). While relaxations like weak stochastic transitivity~\citep{falahatgar2018limits} or relaxed stochastic transitivity~\citep{Yue2011BeatTM} exist, they typically still assume the true ranking exists and the preference probabilities are consistent, i.e., $p_{i,j} > \frac12$ if and only if $i$ is ranked higher than $j$.
In reality, the existence of such coherent ranking aligned with item preferences is rarely the case. For example, $p_{i,j}$ may be interpreted as the probability of one basketball team $i$ beating another team $j$, and there can be a circle among the match advantage relations.  

In this paper, we do not assume such coherent ranking exists and solely rely on the \emph{Borda score} based on preference probabilities. The Borda score $B(i)$ of an item $i$ is the probability that it is preferred when compared with another random item, namely $B(i) := \frac{1}{K-1} \sum_{j \ne i} p_{i,j}$. The item with the highest Borda score is called the \textit{Borda winner}. The Borda winner is intuitively appealing and always well-defined for any set of preferential probabilities. The Borda score also does not require the problem instance to obey any consistency or transitivity, and it is considered one of the most general criteria.

To identify the Borda winner, estimations of the Borda scores are needed. Since estimating the Borda score for one item requires comparing it with every other items,  the sample complexity is prohibitively high when there are numerous items. On the other hand, in many real-world applications, the agent has access to side information that can assist the evaluation of $p_{i,j}$. For instance, an e-commerce item carries its category as well as many other attributes, and the user might have a preference for a certain category \citep{Wang2018BillionscaleCE}. For a movie, the genre and the plot as well as the directors and actors can also be taken into consideration when making choices \citep{Liu2017CollaborativeTR}. 



Based on the above motivation, we consider \emph{Generalized Linear Dueling Bandits}. At each round, the agent selects two items from a finite set of items and receives a comparison result of the preferred item. The comparisons depend on known intrinsic contexts/features associated with each pair of items. The contexts can be obtained from upstream tasks, such as topic modeling \citep{Zhu2012MedLDAMM} or embedding \citep{vasile2016meta}.
Our goal is to adaptively select items and minimize the regret with respect to the optimal item (i.e., Borda winner). 
Our main contributions are summarized as follows:
\begin{itemize}[leftmargin=*,nosep]
    \item We show a hardness result regarding the Borda regret minimization for the (generalized) linear model. We prove a worst-case regret lower bound $\Omega(d^{2/3} T^{2/3})$ for our dueling bandit model, showing that even in the stochastic setting, minimizing the Borda regret is difficult. The construction and proof of the lower bound are new and might be of independent interest.
    \item We propose an explore-then-commit type algorithm under the stochastic setting, which can achieve a nearly matching upper bound $\tilde{O}(d^{2/3} T^{2/3})$. When the number of items $K$ is small, the algorithm can also be configured to achieve a smaller regret $\tilde{O}\big( (d \log K)^{1/3} T^{2/3}\big)$.
    \item We propose an EXP3 type algorithm for  linear dueling bandits under the adversarial setting, which can achieve a nearly matching upper bound $\tilde{O}\big( (d \log K)^{1/3} T^{2/3}\big)$. 
    \item We conduct empirical studies to verify the correctness of our theoretical claims. Under both synthetic and real-world data settings, our algorithms can  outperform all the baselines in terms of cumulative regret.
\end{itemize}


\paragraph{Notation}
In this paper, we use normal letters to denote scalars, lowercase bold letters to denote vectors, and uppercase bold letters to denote matrices. For a vector $\xb$, $\|\xb\|$ denotes its $\ell_2$-norm. The weighted $\ell_2$-norm associated with a positive-definite matrix $\Ab$ is defined as $\|\xb\|_{\Ab} = \sqrt{\xb^{\top} \Ab \xb}$.
The minimum eigenvalue of a matrix $\Ab$ is written as $\lambda_{\min}(\Ab)$.
We use $\Ab \succeq \Bb$ to denote that the matrix $\Ab - \Bb$ is positive semi-definite.
We use standard asymptotic notations including $O(\cdot), \Omega(\cdot), \Theta(\cdot)$, and $\tilde O(\cdot),\tilde\Omega(\cdot), \tilde\Theta(\cdot)$ will hide logarithmic factors.
 For a positive integer $N$, $[N] := \{1,2,\dots,N\}$.
\section{Related Work}


\textbf{Multi-armed and Contextual Bandits} Multi-armed bandit is a problem of identifying the best choice in a sequential decision-making system. It has been studied in numerous ways with a wide range of applications~\citep{EvenDar2002PACBF,lai1985asymptotically,kuleshov2014algorithms}. Contextual linear bandit is a special type of bandit problem where the agent is provided with side information, i.e., contexts, and rewards are assumed to have a linear structure. Various algorithms~\citep{rusmevichientong2010linearly,filippi2010parametric,AbbasiYadkori2011ImprovedAF,li2017provably,jun2017scalable} have been proposed to utilize this contextual information. 

\textbf{Dueling Bandits and Its Performance Metrics} Dueling bandits is a variant of MAB with preferential feedback ~\citep{Yue2012TheKD,Zoghi2014RelativeUC,Zoghi2015CopelandDB}. A comprehensive survey can be found at~\citet{bengs2021preference}. As discussed previously, the probabilistic structure of a dueling bandits problem is governed by the preference probabilities, over which an optimal item needs to be defined. Optimality under the \emph{Borda score} criteria has been adopted by several previous works~\citep{jamieson2015sparse,falahatgar2017maxing,heckel2018approximate,Saha2021AdversarialDB}. 
The most relevant work to ours is \citet{Saha2021AdversarialDB}, where they studied the problem of regret minimization for adversarial dueling bandits and proved a $T$-round Borda regret upper bound $\tilde O(K^{1/3}T^{2/3})$. They also provide an $\Omega(K^{1/3}T^{2/3})$ lower bound for stationary dueling bandits using Borda regret. 

Apart from the Borda score, \emph{Copeland score} is also a widely used criteria~\citep{Urvoy2013GenericEA,Zoghi2015CopelandDB,zoghi2014relative,Wu2016DoubleTS,komiyama2016copeland}. It is defined as $C(i) := \frac{1}{K-1} \sum_{j \ne i} \ind \{ p_{i,j} > 1/2 \} $. A Copeland winner is the item that beats the most number of other items. It can be viewed as a ``thresholded'' version of Borda winner. In addition to Borda and Copeland winners, optimality notions such as a von Neumann winner were also studied in~\citet{Ramamohan2016DuelingBB,Dudk2015ContextualDB,balsubramani2016instance}.

Another line of work focuses on identifying the optimal item or the total ranking, assuming the preference probabilities are consistent. Common consistency conditions include Strong Stochastic Transitivity~\citep{Yue2012TheKD,falahatgar2017maxing,falahatgar2017maximum}, Weak Stochastic Transitivity~\citep{falahatgar2018limits,ren2019sample,wu2022adaptive,lou2022active}, Relaxed Stochastic Transitivity~\citep{Yue2011BeatTM} and Stochastic Triangle Inequality. Sometimes the aforementioned transitivity can also be implied by some structured models like the Bradley–Terry model.
We emphasize that these consistency conditions are not assumed or implicitly implied in our setting.

\textbf{Contextual Dueling Bandits} In~\citet{Dudk2015ContextualDB}, contextual information is incorporated in the dueling bandits framework. Later,~\citet{saha2021optimal} studied a structured contextual dueling bandits setting where each item $i$ has its own contextual vector $\xb_i$ (sometimes called Linear Stochastic Transitivity). Each item then has an intrinsic score $v_i$ equal to the linear product of an unknown parameter vector $\btheta^*$ and its contextual vector $\xb_i$. The preference probability between two items $i$ and $j$ is assumed to be $\mu(v_i - v_j)$ where $\mu(\cdot)$ is the logistic function. These intrinsic scores of items naturally define a ranking over items. The regret is also computed as the gap between the scores of pulled items and the best item. While in this paper, we assume that the contextual vectors are associated with item pairs and define regret on the Borda score. In Section~\ref{subsec:structure}, we provide a more detailed discussion showing that the setting considered in~\citet{saha2021optimal} can be viewed as a special case of our model.

\section{Backgrounds and Preliminaries} \label{sec:prelim}
\subsection{Problem Setting} \label{sec:probsetup}
We first consider the stochastic preferential feedback model with $K$ items in the fixed time horizon setting. We denote the item set by $[K]$ and let $T$ be the total number of rounds. At each round $t$, the agent can pick any pair of items $(i_t, j_t)$ to compare and receive stochastic feedback about whether item $i_t$ is preferred over item $j_t$, (denoted by $i_t \succ j_t$). We denote the probability of seeing the event $i \succ j$ as $p_{i,j} \in [0,1]$. Naturally, we assume $p_{i,j} + p_{j,i} = 1$, and $p_{i,i} = 1/2$. 

In this paper, we are concerned with the generalized linear model (GLM), where there is assumed to exist an \textit{unknown} parameter $\btheta^* \in \RR^d$, and each pair of items $(i,j)$ has its own \textit{known} contextual/feature vector $\bphi_{i,j} \in \RR^d$ with $\|\bphi_{i,j}\| \le 1$. There is also a fixed known link function (sometimes called comparison function) $\mu(\cdot)$ that is monotonically increasing and satisfies $\mu(x) + \mu(-x) = 1$, e.g. a linear function or the logistic function $\mu(x) = 1/(1+e^{-x})$. The preference probability is defined as $p_{i,j} = \mu( \bphi_{i,j}^{\top} \btheta^* )$. At each round, denote $r_{t} = \ind \{i_t \succ j_t \}$, then we have

\begin{align*}
    \EE[r_{t} | i_t, j_t]
    & =
    p_{i_t,j_t}
    = 
    \mu( \bphi_{i_t,j_t}^{\top} \btheta^*  ).
\end{align*}
Then our model can also be  written as 
\begin{align*}
    r_{t} = \mu(\bphi_{i_t,j_t}^{\top} \btheta^*) + \epsilon_{t},
\end{align*}
where the noises $\{\epsilon_t\}_{t \in [T]}$ are zero-mean, $1$-sub-Gaussian and assumed independent from each other. Note that, given the constraint $p_{i,j} + p_{j,i} = 1$, it is implied that $\bphi_{i,j} = - \bphi_{j,i}$ for any $i\in[K], j\in[K]$.

The agent's goal is to maximize the cumulative Borda score. The (slightly modified \footnote{Previous works define Borda score as $B'_i = \frac{1}{K-1} \sum_{j \ne i} p_{i,j}$, excluding the diagonal term $p_{i,i} = 1/2$. Our definition is equivalent since the difference between two items satisfies $B(i) - B_j = \frac{K-1}{K} (B'_i - B'_j)$. Therefore, the regret will be in the same order for both definitions.}) Borda score of item $i$ is defined as $B(i) = \frac{1}{K}\sum_{j =1}^{K}  p_{i,j}$, 
and the Borda winner is defined as $i^* = \argmax_{i\in[K]} B(i)$.  
The problem of merely identifying the Borda winner was deemed trivial \citep{Zoghi2014RelativeUC,BusaFekete2018PreferencebasedOL} because for a fixed item $i$, uniformly random sampling $j$ and receiving feedback $r_{i,j} = \mathrm{Bernoulli}(p_{i,j})$ yield a Bernoulli random variable with its expectation being the Borda score $B(i)$. This so-called  \textit{Borda reduction} trick makes identifying the Borda winner as easy as the best-arm identification for $K$-armed bandits. Moreover, if the regret is defined as $\mathrm{Regret}(T) = \sum_{t=1}^{T} (B(i^*) - B(i_t))$, then any optimal algorithms for multi-arm bandits can achieve $\tilde{O}(\sqrt{T})$ regret.

However, the above definition of regret does not respect the fact that a pair of items are selected at each round. When the agent chooses two items to compare, it is natural to define the regret so that both items contribute equally. A commonly used regret, e.g., in~\citet{Saha2021AdversarialDB}, has the following form:
\begin{align} \label{eqn:borda-regret}
    \mathrm{Regret}(T) = \sum_{t=1}^{T} \big( 2B(i^*) - B(i_t) - B(j_t) \big),
\end{align}
where the regret is defined as the sum of the sub-optimality of both selected arms. Sub-optimality is measured by the gap between the Borda scores of the compared items and the Borda winner. This form of regret deems any classical multi-arm bandit algorithm with Borda reduction vacuous because taking $j_t$ into consideration will invoke $\Theta(T)$ regret. 


\paragraph{Adversarial Setting} {\citet{saha2021adversarial} considered an adversarial setting for the multi-armed case, where at each round $t$, the comparison follows a potentially different probability model, denoted by $\{p^t_{i,j}\}_{i, j \in [K]}$. In this paper, we consider its contextual counterpart. Formally, we assume there is an underlying parameter $\btheta^*_t$, and at round $t$, the preference probability is defined as $p^t_{i,j} = \mu( \bphi_{i,j}^{\top} \btheta_t^* )$. }

The Borda score of item $i \in [K]$ at round $t$ is defined as $B_t(i) = \frac{1}{K}\sum_{j =1}^{K}  p^{t}_{i,j}$, 
and the Borda winner at round $T$ is defined as $i^* = \argmax_{i\in[K]} \sum_{t=1}^{T} B_t(i)$.  The $T$-round regret is thus defined as
$\mathrm{Regret}(T) = \sum_{t=1}^{T} \big( 2B_t(i^*) - B_t(i_t) - B_t(j_t) \big)$.




\subsection{Assumptions}
In this section, we present the assumptions required for establishing theoretical guarantees. Due to the fact that the analysis technique is largely extracted from \citet{li2017provably}, we follow them to make assumptions to enable regret minimization for generalized linear dueling bandits. 

We make a regularity assumption about the distribution of the contextual vectors:
\begin{assumption} \label{assumption:lambda0}
There exists a constant $\lambda_0 > 0$ such that $\lambda_{\min} \big(\frac{1}{K^2}\sum_{i=1}^{K} \sum_{j=1}^{K} \bphi_{i,j} \bphi_{i,j}^{\top} \big) \ge \lambda_0$. 
\end{assumption}
This assumption is only utilized to initialize the design matrix $\Vb_{\tau} = \sum_{t=1}^{\tau} \bphi_{i_t,j_t}\bphi_{i_t,j_t}^{\top}$ so that the minimum eigenvalue is large enough. We follow \citet{li2017provably} to deem $\lambda_0$ as a constant.

We also need the following assumption regarding the link function $\mu(\cdot)$:
\begin{assumption} \label{assumption:kappa} Let $\dot{\mu}$ be the first-order derivative of $\mu$. We have
$
    \kappa :=
    \inf_{\|\xb\| \le 1, \| \btheta - \btheta^* \| \le 1} \dot{\mu}(\xb^{\top} \btheta) > 0. 
$
\end{assumption}
Assuming $\kappa > 0$ is necessary to ensure the maximum log-likelihood estimator can converge to the true parameter $\btheta^*$ \citep[Section 3]{li2017provably}.
This type of assumption is commonly made in previous works for generalized linear models~\citep{filippi2010parametric, li2017provably, faury2020improved}. 

Another common assumption is regarding the continuity and smoothness of the link function.
\begin{assumption} \label{assumption:mu}
$\mu$ is twice differentiable. Its first and second-order derivatives are upper-bounded by constants $L_{\mu}$ and $M_{\mu}$ respectively.
\end{assumption}
This is a very mild assumption. For example, it is easy to verify that the logistic link function satisfies \cref{assumption:mu} with $L_{\mu} = M_{\mu} = 1/4$.

\section{The Hardness Result} \label{sec:hardness}

\begin{figure*}[hbt]
    \centering
    
\newcommand{\tikznode}[2]{\relax
\ifmmode%
    \tikz[remember picture,baseline=(#1.base),inner sep=0pt] \node (#1) {$#2$};
\else
    \tikz[remember picture,baseline=(#1.base),inner sep=0pt] \node (#1) {#2};%
\fi}
\tikzset{box around/.style={
    draw,rounded corners,
    inner sep=1pt,outer sep=2pt,
    node contents={},fit=#1
},      
}
\newcommand{\BoxAround}[2][]{
\tikz[overlay,remember picture]{%
\node[blue,dash pattern=on 2pt off 1.25pt,#1,box around=#2];
}}
\newcommand{\phij}{\frac34+\langle\bphi_{i,j},\btheta\rangle}
\newcommand{\phji}{\frac14+\langle\bphi_{j,i},\btheta\rangle}
\newcommand{\fbt}[1]{\frac34+\langle\textbf{bit}(#1),\btheta\rangle}
\footnotesize{
\begin{align*}
    \arraycolsep=4pt\def\arraystretch{1.1}
    \begin{tabular}{r}
        $\text{``good''} \left\{ \lefteqn{\phantom{\begin{matrix}  \\  \\  \ \end{matrix}}} \right.  $ \\
        $\text{``bad'' } \left\{ \lefteqn{\phantom{\begin{matrix}  \\  \\  \ \end{matrix}}} \right.  $
    \end{tabular}
    \hspace{-8pt}
    \left[
        \begin{array}{c|c}
            \begin{array}{ccc}
                \frac12 & \cdots & \frac12 \\
                \vdots  & \ddots & \vdots  \\
                \frac12 & \cdots & \frac12 \\
            \end{array}
            &
            \tikznode{orig}{\phij} \\ \hline
            \phji
            &
            \begin{array}{ccc}
                \frac12 & \cdots & \frac12 \\
                \vdots  & \ddots & \vdots  \\
                \frac12 & \cdots & \frac12 \\
            \end{array}
        \end{array}
    \right]
    \tikznode{dest}{
    \begin{array}{lllll}
        \fbt{0}      &  \fbt{0}      &  \cdots      &  \fbt{0} \\
        \fbt{1}      &  \fbt{1}      &  \cdots      &  \fbt{1} \\
        \vdots       &  \vdots       &  \ddots      &  \vdots  \\
        \fbt{2^d-1}  &  \fbt{2^d-1}  &  \cdots      & \fbt{2^d-1}  \\
    \end{array}
    }
\end{align*}
}

\BoxAround[blue]{(orig)}
\BoxAround[blue]{(dest)}
\tikz[overlay,remember picture]{%
\draw[blue,dash pattern=on 2pt off 1.25pt] (dest.north west) edge (orig.north);
}
\tikz[overlay,remember picture]{%
\draw[blue,dash pattern=on 2pt off 1.25pt] (dest.south west) edge (orig.south);
}
    \caption{Illustration of the hard-to-learn preference probability matrix $\{p^{\btheta}_{i,j}\}_{i \in [K], j \in [K]}$. There are $K = 2^{d+1}$ items in total. The first $2^d$ items are ``good'' items with higher Borda scores, and the last $2^d$ items are ``bad'' items. The upper right block $\{p_{i,j}\}_{i < 2^d, j \ge 2^d}$ is defined as shown in the blue bubble. The lower left block satisfies $p_{i,j} = 1 - p_{j,i}$. For any $\btheta$, there exist one and only best item $i$ such that $\textbf{bit}(i) = \textbf{sign}(\btheta)$.}
    \label{fig:hardness_mat}
\end{figure*}

This section presents \cref{thm:lower}, a worst-case regret lower bound for the stochastic linear dueling bandits. The proof of \cref{thm:lower} relies on a class of hard instances, as shown in \cref{fig:hardness_mat}. We show that any algorithm will incur a certain amount of regret when applied to this hard instance class. The constructed hard instances follow a stochastic linear model, which is a sub-class of the generalized linear model. \citet{saha2021adversarial} 
first proposed a similar construction for finite many arms with no contexts. Our construction is for a contextual setting and the proof of the lower bound takes a rather different route. 

For any $d > 0$, we construct the class of hard instances as follows. An instance is specified by a vector $\btheta \in \{- \Delta, + \Delta \}^{d} $.
The instance contains $2^{d+1}$ items (indexed from 0 to $2^{d+1}-1$).
The preference probability for an instance is defined by $p^{\btheta}_{i,j}$ as:
\begin{align*}
    p^{\btheta}_{i,j} & =
    \begin{cases}
        \frac{1}{2}, \text{ if } i < 2^d, j < 2^d   \text{ or if } i \ge 2^d, j \ge 2^d  \\
        \frac{3}{4}, \text{ if } i < 2^d, j \ge 2^d   \\
        \frac{1}{4}, \text{ if } i \ge 2^d, j < 2^d   \\
    \end{cases}
    + \la \bphi_{i,j}, \btheta \ra, 
\end{align*}
and the $d$-dimensional feature vectors $\bphi_{i,j}$ are given by 
\begin{align*}
    \bphi_{i,j} & =
    \begin{cases}
        \mathbf{0}, \text{ if } i < 2^d, j < 2^d   \text{ or if } i \ge 2^d, j \ge 2^d      \\
        \textbf{bit}(i), \text{ if } i < 2^d, j \ge 2^d   \\
        -\textbf{bit}(j), \text{ if } i \ge 2^d, j < 2^d, \\
    \end{cases}
\end{align*}
where $\textbf{bit}(\cdot)$ is the (shifted) bit representation of non-negative integers, i.e., suppose $x$ has the binary representation $x= b_0 \times 2^0 + b_1 \times 2^1 + \cdots + b_{d-1} \times 2^{d-1}$, then \[\textbf{bit}(x) =(2b_0-1,2b_1-1,\ldots, 2b_{d-1}-1)  = 2 \bb - 1.\] Note that $\textbf{bit}(\cdot) \in \{-1, +1 \}^{d}$, and that $\bphi_{i,j} = - \bphi_{j,i}$ is satisfied. The definition of $p^{\btheta}_{i,j}$ can be slightly tweaked to fit exactly the model described in \cref{sec:prelim} (see \cref{rem:hard-d+1} in Appendix).

Some calculation shows that the Borda scores of the $2^{d+1}$ items are:
\begin{align*}
    B^{\btheta}(i) & =
    \begin{cases}
        \frac{5}{8} + \frac{1}{2} \la \textbf{bit}(i), \btheta \ra , \text{ if } i < 2^d, \\
        \frac{3}{8}, \text{ if } i \ge 2^d.     
    \end{cases}
\end{align*}

Intuitively, the former half of items (those indexed from $0$ to $2^d-1$) are ``good'' items (one among them is optimal, others are nearly optimal), while the latter half of items are ``bad'' items. Under such hard instances, every time one of the two pulled items is a ``bad'' item, then a one-step regret $B^{\btheta}(i^*) - B^{\btheta}(i) \ge 1/4$ is incurred. To minimize regret, we should thus try to avoid pulling ``bad'' items. However, in order to identify the best item among all ``good'' items, comparisons between ``good'' and ``bad'' items are necessary. The reason is simply that comparisons between ``good'' items give no information about the Borda scores as the comparison probabilities are $p^{\btheta}_{i,j} = \frac{1}{2}$ for all $i,j<2^{d}$. Hence, any algorithm that can decently distinguish among the ``good'' items has to pull ``bad'' ones for a fair amount of times, and large regret is thus incurred. A similar observation is also made by \citet{Saha2021AdversarialDB}.

This specific construction emphasizes the intrinsic hardness of Borda regret minimization: to differentiate the best item from its close competitors, the algorithm must query the bad items to gain information.  

Formally, this class of hard instances leads to the following regret lower bound for both stochastic and adversarial settings:
\begin{theorem}
    \label{thm:lower}
    For any algorithm $\cA$, there exists a hard instance $\{ p^{\btheta}_{i,j}\}$ with $T > 4 d^2$, such that $\cA$ will incur expected regret at least
    $\Omega(d^{2/3} T^{2/3})$.
\end{theorem}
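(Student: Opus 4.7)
The plan is to run an Assouad-style change-of-measure argument over the family $\{P_{\btheta}:\btheta\in\{-\Delta,+\Delta\}^d\}$ constructed above, with the separation $\Delta>0$ tuned at the end. Fix any algorithm. For each instance $\btheta$, let $B$ count the pull events (out of the $2T$ performed) in which the pulled item is ``bad'' (index $\ge 2^d$), let $G=2T-B$ be the number of ``good'' pull events, and let $H$ be the sum of Hamming distances (in bit representation) between the good items pulled and the Borda winner $i^*(\btheta)$. From the explicit Borda scores displayed in the excerpt, $B^{\btheta}(i^*)-B^{\btheta}(j)\ge 1/4$ for every bad $j$ and $B^{\btheta}(i^*)-B^{\btheta}(i)=\Delta\cdot h(i)$ for every good $i$, so
\begin{align*}
    \mathrm{Regret}(T)\;\ge\;\tfrac14 B + \Delta\cdot H.
\end{align*}
I would lower-bound $\EE_{\btheta}[\tfrac14 B+\Delta H]$ when $\btheta$ is drawn uniformly from $\{-\Delta,+\Delta\}^d$.

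The core information-theoretic step compares neighboring instances. Fix a coordinate $k\in[d]$, fix the other coordinates $\btheta_{-k}$, and write $\btheta^{\pm}$ for the instances with $\theta_k=\pm\Delta$. By the construction in \cref{fig:hardness_mat}, $\bphi_{i,j}=\mathbf{0}$ on every good-good and every bad-bad comparison, and $\bphi_{i,j}\in\{-1,+1\}^d$ on every cross comparison; hence only bad pulls contribute any KL between the two observation laws. Each such round contributes at most a universal $c_1\Delta^2$ to the KL (the Bernoulli parameters stay bounded away from $0,1$ once $\Delta \le 1/(4d)$, which the eventual choice satisfies under $T>4d^2$), so the chain rule yields $\mathrm{KL}(P_{\btheta^+}\|P_{\btheta^-})\le c_1\Delta^2\,\EE_{\btheta^+}[B]$, and Pinsker's inequality gives
\begin{align*}
    \mathrm{TV}\bigl(P_{\btheta^+},P_{\btheta^-}\bigr)\;\le\; c_2\,\Delta\sqrt{\EE_{\btheta^+}[B]}.
\end{align*}

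Next, I would carry out the Assouad averaging. Let $G_k^{\pm}$ denote the number of good-pull events with $k$-th bit equal to $\pm 1$; the coordinate-$k$ contribution to $H$ equals $G_k^{-}$ under $\btheta^{+}$ and $G_k^{+}$ under $\btheta^{-}$. Because $G_k^{\pm}\in[0,2T]$, changing measure costs at most $2T\cdot\mathrm{TV}$ per coordinate, and summing over $k$ and averaging over $\btheta$ produces an Assouad-type inequality
\begin{align*}
    \EE_{\btheta}[H]\;\ge\;\tfrac{d}{4}\,\EE_{\btheta}[G]\,\bigl(1-c_3\Delta\sqrt{\EE_{\btheta}[B]}\bigr).
\end{align*}
Plugging back, the averaged regret is at least $\tfrac14\EE_{\btheta}[B]+\tfrac{d\Delta}{4}\EE_{\btheta}[G](1-c_3\Delta\sqrt{\EE_{\btheta}[B]})$. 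It remains to tune $\Delta$: if $\EE_{\btheta}[B]\ge T$ the first term already gives $\Omega(T)\supseteq\Omega((dT)^{2/3})$ under $T>4d^2$; otherwise $\EE_{\btheta}[G]\ge T$, and setting $\Delta=c_4(dT)^{-1/3}$ balances the two regimes, since either $\EE_{\btheta}[B]\ge 1/(c_5\Delta^2)=\Omega((dT)^{2/3})$, or the Assouad factor is at least $1/2$ and the second term is $\Omega(\Delta dT)=\Omega((dT)^{2/3})$. Taking the worst $\btheta$ gives the claimed $\Omega(d^{2/3}T^{2/3})$ bound; the adversarial statement is automatic because any stationary stochastic environment is a valid adversarial one.

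The main obstacle is executing the Assouad step cleanly in the presence of the algorithm-dependent random variables $B,G,G_k^{\pm}$, whose laws themselves depend on $\btheta$. I expect to need to condition on $\btheta_{-k}$ before invoking Pinsker—so that only the $k$-th coordinate's change-of-measure enters the TV bound—or alternatively to pass through a coupled regret decomposition that keeps $\EE[B]$ uniformly controlled across the $2^d$ instances. A secondary technical point is to verify the universal boundedness of the per-round KL across the whole family; this uses the smoothness of $\mu$ from \cref{assumption:mu} (or direct computation in the linear case actually used in the construction).
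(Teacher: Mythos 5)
Your overall plan is sound and would deliver the $\Omega(d^{2/3}T^{2/3})$ bound, but it takes a genuinely different route from the paper. You run a Pinsker-based Assouad argument with an explicit case split on $\EE[B]$, whereas the paper (i) first proves a reduction lemma showing that one may assume the first queried item $i_t$ is always a ``good'' item, so that every round contributes exactly one good pull and the per-coordinate error can be phrased as the event $\{\sum_t \ind\{\textbf{bit}^{[k]}(i_t)\ne\mathrm{sign}(\btheta^{[k]})\}\ge T/2\}$, and (ii) applies the Bretagnolle--Huber inequality to that event together with Jensen's inequality over the uniform average on the hypercube, finally combining the resulting bound $\tfrac{\Delta dT}{4}\exp(-40\Delta^2 X)$ with the complementary bound $X/4$ (where $X$ is the averaged expected number of good--bad queries) via $\max\{e^{-y},y\}>1/2$. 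The two engines buy slightly different things: Bretagnolle--Huber plus Jensen avoids any case analysis on how large the KL is, while your Pinsker/Assouad route avoids the $T/2$-threshold event but forces the two-regime tuning of $\Delta$ that you describe; both use the same hard instances, the same regret decomposition into ``bad-pull'' regret and ``wrong-bit'' regret, and the same choice $\Delta\asymp (dT)^{-1/3}$.

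One step, however, is not justified as written: the multiplicative inequality $\EE_{\btheta}[H]\ge\tfrac d4\,\EE_{\btheta}[G]\bigl(1-c_3\Delta\sqrt{\EE_{\btheta}[B]}\bigr)$ does not follow from ``changing measure costs at most $2T\cdot\mathrm{TV}$ per coordinate.'' That argument yields an additive bound of the form $\EE_{\btheta^+}[G_k^-]+\EE_{\btheta^-}[G_k^+]\ \ge\ \EE_{\btheta^-}[G]-2T\cdot\mathrm{TV}(\PP_{\btheta^+},\PP_{\btheta^-})$, and moreover $G$, $G_k^{\pm}$ and the TV terms all change with $\btheta$, so the quantity $\EE_{\btheta}[G]$ appearing multiplicatively on the right has no clean meaning before averaging. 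This is exactly the obstacle you flag, and it is repairable: using the additive form, averaging over the hypercube with Jensen applied to $\sqrt{\cdot}$, the error term is of order $\Delta^2 dT\sqrt{\bar B}$, which your case split ($\bar B\gtrsim (dT)^{2/3}$ versus $\bar B\lesssim (dT)^{2/3}$, using $G+B=2T$ and $T>4d^2$) absorbs, so the conclusion survives. Alternatively, adopting the paper's reduction lemma removes the dependence on a random $G$ altogether, which is precisely why the paper's bookkeeping is simpler. As submitted, though, the central Assouad inequality needs to be restated in its additive form (or the conditioning argument you sketch must be carried out) before the proof is complete.
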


The construction of this hard instance for linear dueling bandits is inspired by the worst-case lower bound for the stochastic linear bandit \citep{dani2008stochastic}, which has the order $\Omega(d \sqrt{T})$, while ours is $\Omega(d^{2/3}T^{2/3})$. The difference is that for the linear or multi-armed stochastic bandit, eliminating bad arms can make further exploration less expensive. But in our case, any amount of exploration will not reduce the cost of further exploration.
This essentially means that exploration and exploitation must be separate, which is also supported by the fact that a simple explore-then-commit algorithm shown in \cref{sec:alg} can be nearly optimal.

\section{Stochastic Contextual Dueling Bandit} \label{sec:alg}
\subsection{Algorithm Description}
\begin{algorithm*}[htb]
    \caption{\textsc{BETC-GLM}}
    \label{alg:BETC-GLM}
    \begin{algorithmic}[1]
        \STATE {\bfseries Input:} time horizon $T$, number of items $K$, feature dimension $d$, feature vectors $\bphi_{i,j}$ for $i \in [K]$, $j \in [K]$, exploration rounds $\tau$, error tolerance $\epsilon$, failure probability $\delta$.
        \FOR{$t=1,2, \dots, \tau$} \label{line:pure-exploration1}
            \STATE sample $i_t \sim \mathrm{Uniform}([K])$, $j_t \sim \mathrm{Uniform}([K])$
            \STATE query pair $(i_t, j_t)$ and receive feedback $r_t$
        \ENDFOR \label{line:pure-exploration4}
        \STATE Find the G-optimal design $\pi(i,j)$ based on $\bphi_{i,j}$ for $i \in [K]$, $j \in [K]$ \label{line:optimal-design}
        \STATE Let $N(i,j) = \Big \lceil \frac{d \pi(i,j)}{\epsilon^2} \Big \rceil$ for any $(i,j) \in \text{supp}(\pi)$ , denote $N = \sum_{i=1}^{K} \sum_{j=1}^{K} N(i,j)$ \label{line:designed-exploration1}
        \FOR{$i \in [K]$, $j \in [K]$, $s \in [N(i,j)]$}\label{line:designed-exploration2}
            \STATE set $t \leftarrow t + 1$, set $(i_t, j_t) = (i, j)$\label{line:designed-exploration3}
            \STATE query pair $(i_t, j_t)$ and receive feedback $r_t$ \label{line:designed-exploration4}
        \ENDFOR \label{line:designed-exploration5}
        \STATE Calculate the empirical MLE estimator $\hat{\btheta}_{\tau+N}$ based on all $\tau + N$ samples via \eqref{eqn:MLE} \label{line:MLE}
        \STATE Estimate the Borda score for each item:
        \begin{align*}
            \hat{B}(i) = \frac{1}{K} \sum_{j=1}^{K} \mu(\bphi_{i,j}^{\top} \hat{\btheta}_{\tau+N}),
            \qquad 
            \hat{i} = \argmax_{i \in [K]} \hat{B}(i)
        \end{align*} \label{line:estimate-Borda}
        \STATE Keep querying $(\hat{i}, \hat{i})$ for the rest of the time. \label{line:exploitation}
    \end{algorithmic}
\end{algorithm*}

We propose an algorithm named Borda Explore-Then-Commit for Generalized Linear Models (\textsc{BETC-GLM}), presented in Algorithm~\ref{alg:BETC-GLM}. Our algorithm is inspired by the algorithm for generalized linear models proposed by \citet{li2017provably}. 

At the high level, Algorithm~\ref{alg:BETC-GLM} can be divided into two phases: the exploration phase (Line~\ref{line:pure-exploration1}-\ref{line:designed-exploration5}) and the exploitation phase (Line~\ref{line:MLE}-\ref{line:exploitation}). The exploration phase ensures that the MLE estimator $\hat{\btheta}$ is accurate enough so that the estimated Borda score is within $\tilde{O}(\epsilon)$-range of the true Borda score (ignoring other quantities). Then the exploitation phase simply chooses the empirical Borda winner to incur small regret.

During the exploration phase, the algorithm first performs ``pure exploration'' (Line~\ref{line:pure-exploration1}-\ref{line:pure-exploration4}), which can be seen as an initialization step for the algorithm. 
The purpose of this step is to ensure the design matrix $\Vb_{\tau+N} = \sum_{t=1}^{\tau+N} \bphi_{i_t,j_t}\bphi_{i_t,j_t}^{\top}$ is positive definite. 

After that, the algorithm will perform the ``designed exploration''. Line~\ref{line:optimal-design} will find the G-optimal design, which minimizes the objective function $g(\pi) = \max_{i,j}\|\bphi_{i,j}\|^2_{\Vb(\pi)^{-1}}$, where $\Vb(\pi):= \sum_{i,j} \pi(i,j) \bphi_{i,j}\bphi_{i,j}^{\top}$. The G-optimal design $\pi^*(\cdot)$ satisfies $\|\bphi_{i,j}\|^2_{\Vb(\pi^*)^{-1}} \le d$, and can be efficiently approximated by the Frank-Wolfe algorithm (See \cref{rem:optimal-design} for a detailed discussion). Then the algorithm will follow $\pi(\cdot)$ found at Line~\ref{line:optimal-design} to determine how many samples (Line~\ref{line:designed-exploration1}) are needed. At Line~\ref{line:designed-exploration2}-\ref{line:designed-exploration5}, there are in total $N = \sum_{i=1}^{K} \sum_{j=1}^{K} N(i,j)$ samples queried, and the algorithm shall index them by $t= \tau+1, \tau+2, \dots, \tau+N$. 

At Line~\ref{line:MLE}, the algorithm collects all the $\tau+N$ samples and performs the maximum likelihood estimation (MLE).
For the generalized linear model, the MLE estimator $\hat{\btheta}_{\tau+N}$ satisfies:
\begin{align} \label{eqn:MLE}
    \sum_{t=1}^{\tau + N}  \mu(\bphi_{i_t,j_t}^{\top} \hat{\btheta}_{\tau+N})\bphi_{i_t,j_t}
    =
    \sum_{t=1}^{\tau + N}  r_{t}\bphi_{i_t,j_t}, 
\end{align}
or equivalently, it can be determined by solving a strongly concave optimization problem:
\begin{align*}
    \hat{\btheta}_{\tau+N} & \in 
    \argmax_{\btheta} 
    \sum_{t=1}^{\tau + N}
    \bigg(
    r_{t}\bphi_{i_t,j_t}^{\top} \btheta - m(\bphi_{i_t,j_t}^{\top} \btheta)
    \bigg)
    ,
\end{align*}
where $\dot{m}(\cdot) = \mu(\cdot)$. For the logistic link function, $m(x) = \log(1 + e^{x})$.
As a special case of our generalized linear model, the linear model has a closed-form solution for \eqref{eqn:MLE}. For example, if $\mu(x) = \frac{1}{2} + x$, i.e. $p_{i,j} = \frac{1}{2} + \bphi_{i,j}^{\top} \btheta^*$, then \eqref{eqn:MLE} becomes:
\begin{align*}
    \hat{\btheta}_{\tau+N}
    & = 
    \Vb_{\tau+N}^{-1}
    \sum_{t=1}^{\tau + N}  (r_{t} - 1/2) \bphi_{i_t,j_t},    
\end{align*}
where $\Vb_{\tau+N} = \sum_{t=1}^{\tau + N}  \bphi_{i_t,j_t} \bphi_{i_t,j_t}^{\top}$. 


After the MLE estimator is obtained, Line~\ref{line:estimate-Borda} will calculate the estimated Borda score $\hat{B}(i)$ for each item based on $\hat{\btheta}_{\tau+N}$, and pick the empirically best one.

\subsection{A Matching Regret Upper Bound}
\cref{alg:BETC-GLM} can be configured to tightly match the worst-case lower bound. The configuration and performance are described as follows:
\begin{theorem}
\label{thm:BETC-GLM-matching}
Suppose Assumption~\ref{assumption:lambda0}-\ref{assumption:mu} hold and $T = \Omega(d^2)$. For any $\delta > 0$, if we set $\tau = C_4 \lambda_0^{-2} (d+\log(1/\delta))$ ($C_4$ is a universal constant) and $\epsilon = d^{1/6} T^{-1/3}$, then with probability at least $1-2 \delta$,
    \cref{alg:BETC-GLM} will incur regret bounded by:
    \begin{align*}
        {O}
    \Big(
    \kappa^{-1}
    d ^{2/3} T^{2/3} 
    \sqrt{\log\big( {T}/{d\delta} \big)}
    \Big).
    \end{align*}
By setting $\delta = T^{-1}$, the expected regret is bounded as $\tilde{O}(\kappa^{-1}
    d ^{2/3} T^{2/3} )$.
\end{theorem}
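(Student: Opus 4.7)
The plan is to decompose the regret into three pieces corresponding to the three phases of \cref{alg:BETC-GLM}: the pure exploration phase (lines \ref{line:pure-exploration1}--\ref{line:pure-exploration4}), the designed exploration phase (lines \ref{line:designed-exploration2}--\ref{line:designed-exploration5}), and the exploitation phase (line \ref{line:exploitation}), and then tune $\epsilon$ to balance them. The per-round regret is trivially bounded by a constant (at most $2$), so the first two phases contribute at most $O(\tau)$ and $O(N)$ respectively, where $N = \sum_{i,j} N(i,j) = O(d/\epsilon^2)$ by line \ref{line:designed-exploration1} and the fact that $\sum_{i,j} \pi(i,j) = 1$. All the real work goes into showing that the exploitation phase contributes regret at most $T \cdot \tilde{O}(\kappa^{-1}\epsilon\sqrt{d})$.

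The core technical step is to show that the MLE $\hat{\btheta}_{\tau+N}$ satisfies the pointwise guarantee
\[
|\bphi_{i,j}^\top(\hat{\btheta}_{\tau+N} - \btheta^*)| \le C\kappa^{-1}\,\|\bphi_{i,j}\|_{\Vb_{\tau+N}^{-1}}\,\sqrt{d + \log(1/\delta)}
\]
for all $(i,j)$, with probability at least $1-\delta$. First I would use \cref{assumption:lambda0} together with a matrix Chernoff (or Tropp-style) concentration bound applied to the i.i.d.\ uniform samples of the pure exploration phase to show that, for $\tau = C_4 \lambda_0^{-2}(d+\log(1/\delta))$, the initial design matrix $\Vb_\tau$ is positive definite with $\lambda_{\min}(\Vb_\tau) \gtrsim 1$ with probability $\ge 1-\delta$; this is the sole purpose of the pure exploration phase and it mirrors the argument in \citet{li2017provably}. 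Conditional on this event, the standard GLM MLE analysis from \citet{li2017provably} (using \cref{assumption:kappa} to invert the vector-valued score equation \eqref{eqn:MLE}) yields the displayed bound. Then, by the G-optimal design property $\|\bphi_{i,j}\|^2_{\Vb(\pi^*)^{-1}} \le d$ and the fact that $\Vb_{\tau+N} \succeq \sum_{i,j} N(i,j)\bphi_{i,j}\bphi_{i,j}^\top \succeq (d/\epsilon^2)\Vb(\pi^*)$ by line \ref{line:designed-exploration1}, we get $\|\bphi_{i,j}\|^2_{\Vb_{\tau+N}^{-1}} \le \epsilon^2$ uniformly in $(i,j)$, so $|\bphi_{i,j}^\top(\hat{\btheta}_{\tau+N}-\btheta^*)| \le C\kappa^{-1}\epsilon\sqrt{d+\log(1/\delta)}$.

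Next I translate the parameter-level accuracy into Borda-score accuracy. Using \cref{assumption:mu} ($\mu$ is $L_\mu$-Lipschitz) and the definition $\hat{B}(i) = \frac{1}{K}\sum_j \mu(\bphi_{i,j}^\top\hat{\btheta}_{\tau+N})$, we get
\[
|\hat{B}(i) - B(i)| \le \frac{L_\mu}{K}\sum_{j=1}^K |\bphi_{i,j}^\top(\hat{\btheta}_{\tau+N}-\btheta^*)| \le C L_\mu \kappa^{-1}\epsilon\sqrt{d+\log(1/\delta)},
\]
uniformly in $i$. Since $\hat{i} = \argmax_i \hat{B}(i)$, a standard two-step argument gives $B(i^*) - B(\hat{i}) \le 2 C L_\mu \kappa^{-1}\epsilon\sqrt{d+\log(1/\delta)}$. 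During the exploitation phase (line \ref{line:exploitation}), the algorithm plays the pair $(\hat{i},\hat{i})$, so each round incurs regret $2(B(i^*)-B(\hat{i}))$, and summing over at most $T$ rounds gives an exploitation regret of $\tilde{O}(T\kappa^{-1}\epsilon\sqrt{d})$.

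Finally I combine the three contributions:
\[
\mathrm{Regret}(T) \lesssim \tau + N + T\kappa^{-1}\epsilon\sqrt{d+\log(1/\delta)} \lesssim (d+\log(1/\delta)) + \frac{d}{\epsilon^2} + T\kappa^{-1}\epsilon\sqrt{d+\log(1/\delta)}.
\]
Balancing $d/\epsilon^2$ and $T\kappa^{-1}\epsilon\sqrt{d}$ by choosing $\epsilon = d^{1/6}T^{-1/3}$ yields both terms of order $\kappa^{-1}d^{2/3}T^{2/3}\sqrt{\log(T/(d\delta))}$, dominating the $\tau$ term once $T = \Omega(d^2)$. Setting $\delta = T^{-1}$ and handling the low-probability failure event (where regret is at most $2T$) via the standard decomposition then gives the stated expected regret bound. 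The only delicate step is the MLE concentration; I expect the rest to be routine calculation once the G-optimal design scaling $\|\bphi_{i,j}\|_{\Vb_{\tau+N}^{-1}} \le \epsilon$ is in hand.
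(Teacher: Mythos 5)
Your proposal follows essentially the same route as the paper's proof: matrix concentration to guarantee $\lambda_{\min}(\Vb_\tau)\ge 1$ after the pure-exploration phase, the self-normalized GLM MLE bound from \citet{li2017provably} combined via Cauchy--Schwarz with the G-optimal design scaling $\|\bphi_{i,j}\|_{\Vb_{\tau+N}^{-1}}\le\epsilon$, the Lipschitz transfer to Borda scores with the standard two-step argument for $B(i^*)-B(\hat{i})$, and the three-phase regret decomposition balanced by $\epsilon=d^{1/6}T^{-1/3}$. The only cosmetic difference is that you state the MLE width as $\sqrt{d+\log(1/\delta)}$ while the paper's lemma carries an extra $\log(1+2(\tau+N)/d)$ factor, which is precisely the source of the $\sqrt{\log(T/(d\delta))}$ in the final bound; since your final statement already includes that factor, the argument is consistent.
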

For linear bandit models, such as the hard-to-learn instances in \cref{sec:hardness}, $\kappa$ is a universal constant. Therefore, \cref{thm:BETC-GLM-matching} tightly matches the lower bound in \cref{thm:lower}, up to logarithmic factors.

\begin{remark}[Regret for Fewer Arms]
In typical scenarios, the number of items $K$ is not exponentially large in the dimension $d$. In this case, we can choose a different parameter set of $\tau$ and $\epsilon$ such that \cref{alg:BETC-GLM} can achieve a smaller regret bound $\tilde{O}\big(\kappa^{-1}(d \log K)^{1/3} T^{2/3}\big)$ with smaller dependence on the dimension $d$. See \cref{thm:BETC-GLM} in \cref{sec:add-result}.
\end{remark}

\begin{remark}[Regret for Infinitely Many Arms]
In most practical scenarios of dueling bandits, it is adequate to consider a finite number $K$ of items (e.g., ranking items). 
Nonetheless, BETC-GLM can be easily adapted to accommodate infinitely many arms in terms of regret. We can construct a covering over all $\bphi_{i,j}$ and perform optimal design and exploration on the covering set. The resulting regret will be the same as our upper bound, i.e., $\tilde{O}(d^{2/3}T^{2/3})$ 
 up to some error caused by the epsilon net argument.
\end{remark}

\begin{remark}[Approximate G-optimal Design] \label{rem:optimal-design}
\cref{alg:BETC-GLM} assumes an exact G-optimal design $\pi$ is obtained.
In the experiments, we use the Frank-Wolfe algorithm to solve the constraint optimization problem (See \cref{alg:fw}, \cref{sec:fw}). To find a policy $\pi$ such that $g(\pi) \le (1+\varepsilon) g(\pi^*)$, roughly $O(d/\varepsilon)$ optimization steps are needed. Such a near-optimal design will introduce a factor of $(1 + \varepsilon)^{1/3}$ into the upper bounds. 
\end{remark}

\section{Adversarial Contextual Dueling Bandit} \label{sec:adv}
This section addresses Borda regret minimization under the adversarial setting. As we introduced in \cref{sec:probsetup}, the unknown parameter  $\btheta_t$ can vary for each round $t$, while the contextual vectors $\bphi_{i,j}$ are fixed. 

Our proposed algorithm, BEXP3, is designed for the contextual linear model. Formally, at round $t$ and given pair $(i,j)$, we have $p^t_{i,j} = \frac{1}{2} +  \la \bphi_{i,j},\btheta^*_t\ra$. 

\subsection{Algorithm Description}

\begin{algorithm*}[htb]
    \caption{BEXP3}
    \label{alg:bexp3}
    \begin{algorithmic}[1]
        \STATE \textbf{Input:} time horizon $T$, number of items $K$, feature dimension $d$, feature vectors $\bphi_{i,j}$ for $i \in [K]$, $j \in [K]$, learning rate $\eta$,
        exploration parameter $\gamma$.
        \STATE \textbf{Initialize:} $q_1(i) = \frac{1}{K}$. \label{line:A-init}
        \FOR{$t=1,\ldots,T$}
        \STATE Sample items $i_t \sim q_t$, $j_t \sim q_t$. \label{line:A-sample}
        \STATE Query pair $(i_t, j_t)$ and receive feedback $r_t$ \label{line:A-query}
        \STATE Calculate $Q_t = \sum_{i \in [K]}\sum_{j \in [K]} q_t(i) q_t(j) \bphi_{i,j} \bphi_{i,j}^{\top}$, $\hat{\btheta}_t = Q_t^{-1} \bphi_{i_t,j_t}r_t$. \label{line:A-estimate}
        \STATE Calculate the (shifted) Borda score estimates $\hat B_t(i) = 
        \la 
        \frac{1}{K}\sum_{j \in [K]} 
        \bphi_{i,j}, \hat{\btheta}_t
        \ra$.  \label{line:A-borda}
        \STATE Update for all $i \in [K]$, set
        \begin{align*}
        \tilde{q}_{t+1}(i) & = \frac{\exp (\eta \sum_{l=1}^{t}\hat B_l(i))}{\sum_{j \in [K]}\exp (\eta \sum_{l=1}^{t}\hat B_l(j))};
            & q_{t+1}(i)  = (1-\gamma)\tilde{q}_{t+1}(i) + \frac{\gamma}{K}.
        \end{align*}  \label{line:A-update}
        \ENDFOR
    \end{algorithmic}
\end{algorithm*}

\cref{alg:bexp3} is adapted from the DEXP3 algorithm in \citet{saha2021adversarial}, which deals with the adversarial multi-armed dueling bandit. 
\cref{alg:bexp3} maintains a distribution $q_t(\cdot)$ over $[K]$, initialized as uniform distribution (Line~\ref{line:A-init}). At every round $t$, two items are chosen following $q_t$ independently. Then Line~\ref{line:A-estimate} calculates the one-sample unbiased estimate $\hat{\btheta}_t$ of the true underlying parameter $\btheta^*_t$. Line~\ref{line:A-borda} further calculates the unbiased estimate of the (shifted) Borda score. Note that the true Borda score at round $t$ satisfies $B_t(i) = \frac{1}{2} + \la 
        \frac{1}{K}\sum_{j \in [K]} 
        \bphi_{i,j}, \btheta^*_t
        \ra$.
$\hat{B}_t$ instead only estimates the second term of the Borda score. This is a choice to simplify the proof.
The cumulative estimated score $\sum_{l=1}^{t}\hat B_l(i)$ can be seen as the estimated cumulative reward of item $i$ at round $t$.  
In Line~\ref{line:A-update}, ${q}_{t+1}$ is defined by the classic exponential weight update, along with a uniform exploration policy controlled by $\gamma$.

\subsection{Upper Bounds}
\cref{alg:bexp3} can also be configured to tightly match the worst-case lower bound:
\begin{theorem}\label{Thm:adv}
    Suppose \cref{assumption:lambda0} holds.
    If we set $\eta = (\log K)^{2/3}d^{-1/3}T^{-2/3}$ and $\gamma = \sqrt{\eta d /\lambda_0} = (\log K)^{1/3}d^{1/3}T^{-1/3} \lambda_0^{-1/2}$, then the expected regret is upper-bounded by$$O \big( (d\log K)^{1/3} T^{2/3} \big).$$
\end{theorem}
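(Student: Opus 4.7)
The plan is to follow a standard EXP3-with-exploration analysis, but carefully tailored to the dueling structure where two items are sampled independently from $q_t$ and the feature vectors are antisymmetric.

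First I would verify that $\hat{\btheta}_t$ is a conditionally unbiased estimator of $\btheta^*_t$. Taking conditional expectation given the history,
\[
\EE[\hat{\btheta}_t \mid \cF_{t-1}] = Q_t^{-1}\sum_{i,j}q_t(i)q_t(j)\bphi_{i,j}\bigl(\tfrac12 + \la \bphi_{i,j},\btheta^*_t\ra\bigr).
\]
The $1/2$ term vanishes because the antisymmetry $\bphi_{i,j} = -\bphi_{j,i}$ combined with the symmetric product $q_t(i)q_t(j)$ gives $\sum_{i,j}q_t(i)q_t(j)\bphi_{i,j}=0$, while the linear term reduces to $Q_t\btheta^*_t$. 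Hence $\EE[\hat{B}_t(i)\mid\cF_{t-1}] = B_t(i) - \tfrac12$, so that $\hat B_t$ is an unbiased estimator of the shifted Borda score. Since $i_t,j_t$ are drawn i.i.d.\ from $q_t$, the expected regret can be rewritten as
\[
\EE[\mathrm{Regret}(T)] = 2\sum_{t=1}^{T}\bigl(B_t(i^*) - \EE_{i\sim q_t} B_t(i)\bigr) = 2\,\EE\!\left[\sum_{t=1}^{T}\hat B_t(i^*) - \sum_{t=1}^{T}\la q_t,\hat B_t\ra\right].
\]

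Next I would invoke the classical exponential-weights regret bound for the un-mixed distribution $\tilde q_t$: provided $\eta\,|\hat B_t(i)|\le 1$ for all $i$, one has $\sum_{t}\hat B_t(i^*) - \sum_t \la\tilde q_t,\hat B_t\ra \le \log K/\eta + \eta\sum_t \la\tilde q_t, \hat B_t^2\ra$. The boundedness hypothesis is where Assumption~\ref{assumption:lambda0} enters: because $q_t(i)\ge \gamma/K$, the design matrix satisfies $Q_t\succeq \gamma^2\lambda_0 \Ib$, which yields $|\hat B_t(i)|\le \|\bar\bphi_i\|_{Q_t^{-1}}\|\bphi_{i_t,j_t}\|_{Q_t^{-1}} \le 1/(\gamma^2\lambda_0)$, and the chosen $\eta,\gamma$ indeed satisfy $\eta\le \gamma^2\lambda_0$. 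Converting $\tilde q_t$ back to $q_t$ introduces an additive $O(\gamma T)$ term from the mixing cost $\gamma\sum_t\la \mathbf 1/K - \tilde q_t, b_t\ra$, using $|b_t(i)|\le 1$.

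For the variance term, I would bound
\[
\EE\bigl[\hat B_t(i)^2\mid\cF_{t-1}\bigr] \le \bar\bphi_i^{\top}Q_t^{-1}\bar\bphi_i \le \tfrac{1}{K}\sum_j \|\bphi_{i,j}\|^2_{Q_t^{-1}}
\]
by Jensen, and then use the floor $q_t(j)\ge\gamma/K$ to upgrade the uniform average over $j$ into a $q_t$-weighted one:
\[
\sum_i q_t(i)\,\EE\bigl[\hat B_t(i)^2\mid\cF_{t-1}\bigr] \le \sum_{i,j}\frac{q_t(i)q_t(j)}{\gamma}\|\bphi_{i,j}\|^2_{Q_t^{-1}} = \frac{\mathrm{tr}(Q_t^{-1}Q_t)}{\gamma} = \frac{d}{\gamma}.
\]
Combining everything yields $\EE[\mathrm{Regret}(T)] \le O\!\bigl(\log K/\eta + \eta T d/\gamma + \gamma T\bigr)$. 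Plugging in $\gamma=\sqrt{\eta d/\lambda_0}$ balances the last two summands, and then the stated choice $\eta = (\log K)^{2/3}d^{-1/3}T^{-2/3}$ equalizes $\log K/\eta$ with $T\sqrt{\eta d}$, giving the final $O((d\log K)^{1/3}T^{2/3})$ bound.

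The main obstacle I expect is the variance bound: the naive estimate $\sum_{i,j}q_t(i)q_t(j)\|\bphi_{i,j}\|^2_{Q_t^{-1}}=d$ does not directly apply because the Borda target $\bar\bphi_i$ averages uniformly over $j$ rather than with weight $q_t(j)$. This mismatch is precisely why uniform exploration is needed, and handling it cleanly, together with verifying the EXP3 boundedness prerequisite $\eta\le\gamma^2\lambda_0$ under the stated parameter choices, is the only delicate part of the proof; everything else is standard exponential-weights bookkeeping.
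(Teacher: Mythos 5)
Your proposal follows essentially the same route as the paper's proof: the antisymmetry-based unbiasedness of the one-sample estimator (paper's Lemma~\ref{lemma2}), the boundedness $|\hat B_t(i)|\le 1/(\gamma^2\lambda_0)$ from the $\gamma/K$ exploration floor plus Assumption~\ref{assumption:lambda0} (Lemma~\ref{lemma1}), the exponential-weights inequality with the $(1-\gamma)$ mixing correction, the $d/\gamma$ second-moment bound via Jensen and the trace identity (Lemma~\ref{Lemma:4}), and the same balancing of $\log K/\eta$, $\eta dT/\gamma$, and $\gamma T$. The argument is correct; the only cosmetic difference is that you bound the conditional second moment of $\hat B_t(i)$ directly through $\EE[\bphi_{i_t,j_t}\bphi_{i_t,j_t}^{\top}]=Q_t$, whereas the paper bounds the squared estimate pathwise before taking the expectation, which yields the same $d/\gamma$.
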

Note that the lower bound construction in \cref{thm:lower} is for the linear model and has $K=O(2^d)$, thus exactly matching the upper bound.

\section{Experiments}

\begin{wrapfigure}{r}{0.5\textwidth}
    \centering
    \begin{subfigure}[b]{0.23\textwidth}
        \centering
        \includegraphics[width=1\textwidth]{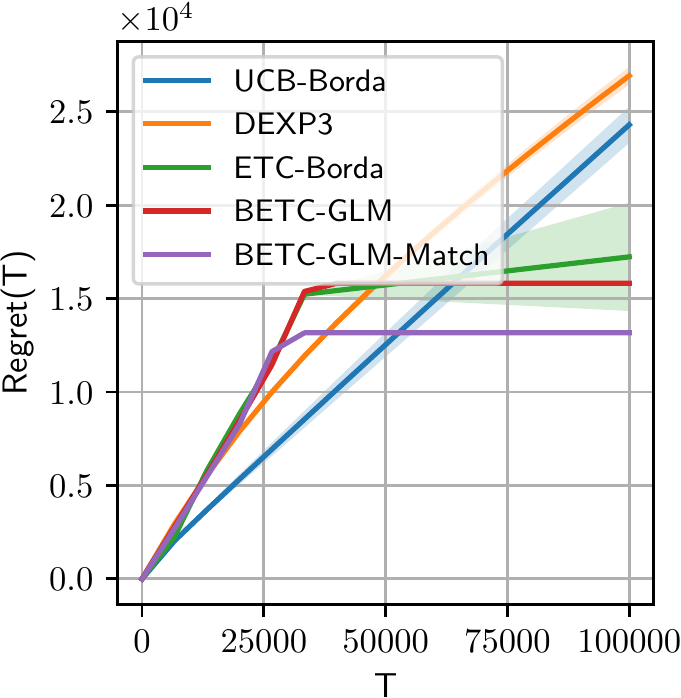}
        \caption{Generated Hard Case}
        \label{fig:simulated_data}
    \end{subfigure}
    \begin{subfigure}[b]{0.23\textwidth}
        \centering
        \includegraphics[width=1\textwidth]{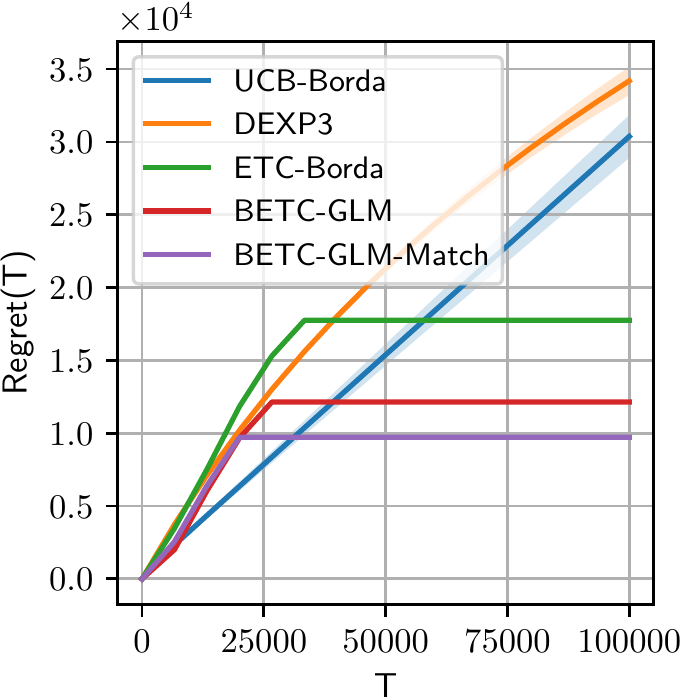}
        \caption{EventTime}
        \label{fig:ev_data}
    \end{subfigure}
    \caption{The regret of the proposed algorithms (\textsc{BETC-GLM}, \textsc{BEXP3}) and the baseline algorithms (\textsc{UCB-Borda}, \textsc{DEXP3}, \textsc{ETC-Borda}).}
\end{wrapfigure}

This section compares the proposed algorithm \textsc{BETC-GLM} with existing ones that are capable of minimizing Borda regret. 
We use random responses (generated from fixed preferential matrices) to interact with all tested algorithms. Each algorithm is run for 50 times over a time horizon of $T = 10^6$. 
We report both the mean and the standard deviation of the cumulative Borda regret and supply some analysis. The following list summarizes all methods we study in this section, a more complete description of the methods and parameters is available in \cref{sec:exp-list}:
\textsc{BETC-GLM(-Match)}: \cref{alg:BETC-GLM} proposed in this paper with different parameters. 
\textsc{UCB-Borda}: The UCB algorithm~\citep{Auer2002FinitetimeAO} using \textit{Borda reduction}.
\textsc{DEXP3}: Dueling-Exp3 developed by \citet{Saha2021AdversarialDB}.
\textsc{ETC-Borda}: A simple explore-then-commit algorithm that does not take any contextual information into account.
\textsc{BEXP3}: The proposed method for adversarial Borda bandits displayed in \cref{alg:bexp3}.

\paragraph{Generated Hard Case}



We first test the algorithms on the hard instances constructed in \cref{sec:hardness}. 
We generate $\btheta^*$ randomly from $\{-\Delta,+\Delta\}^d$ with $\Delta = \frac1{4d}$ so that the comparison probabilities $p^{\btheta^*}_{i,j}\in [0, 1]$ for all $i,j \in [K]$. We pick the dimension $d = 6$ and the number of arms is therefore $K = 2^{d+1} = 128$. Note the dual usage of $d$ in our construction and the model setup in \cref{sec:probsetup}. We refer readers to \cref{rem:hard-d+1} in \cref{sec:proof-lower} for more details.


As depicted in \cref{fig:simulated_data}, the proposed algorithms (\textsc{BETC-GLM}, \textsc{BEXP3}) outperform the baseline algorithms in terms of cumulative regret when reaching the end of time horizon $T$. For \textsc{UCB-Borda}, since it is not tailored for the dueling regret definition, it suffers from a linear regret as its second arm is always sampled uniformly at random, leading to a constant regret per round.
\textsc{DEXP3} and \textsc{ETC-Borda} are two algorithms designed for $K$-armed dueling bandits. Both are unable to utilize contextual information and thus demand more exploration. As expected, their regrets are higher than \textsc{BETC-GLM} or \textsc{BEXP3}. 

\paragraph{Real-world Dataset}
To showcase the performance of the algorithms in a real-world setting, we use the EventTime dataset \citep{Zhang2016CrowdsourcedTA}. In this dataset, $K = 100$ historical events are compared in a pairwise fashion by crowd-sourced workers. 
We first calculate the empirical preference probabilities $\tilde p_{i,j}$ from the collected responses, and construct a generalized linear model based on the empirical preference probabilities. The algorithms are tested under this generalized linear model.
Due to space limitations, more details are deferred to \cref{sec:exp-realworld}. 

As depicted in \cref{fig:ev_data}, the proposed algorithm \textsc{BETC-GLM} outperforms the baseline algorithms in terms of cumulative regret when reaching the end of time horizon $T$. The other proposed algorithm \textsc{BEXP3} performs equally well even when misspecified (the algorithm is designed for the linear setting, while the comparison probability follows a logistic model). 





\section{Conclusion and Future Work}
In this paper, we introduced Borda regret into the generalized linear dueling bandits setting, along with an explore-then-commit type algorithm \textsc{BETC-GLM} and an EXP3 type algorithm \textsc{BEXP3}. The algorithms can achieve a nearly optimal regret upper bound, which we corroborate with a matching lower bound.
The theoretical performance of the algorithms is verified empirically. It demonstrates superior performance compared to other baseline methods.

\appendix

\section{Additional Results and Discussion} 
\subsection{Existing Results for Structured Contexts}\label{subsec:structure}
A structural assumption made by some previous works \citep{saha2021optimal} is that $\bphi_{i,j} = \xb_i - \xb_j$, where $\xb_i$ can be seen as some feature vectors tied to the item. In this work, we do not consider minimizing the Borda regret under the structural assumption. 

The immediate reason is that, when $p_{i,j} = \mu(\xb_{i}^{\top}\btheta^* - \xb_{j}^{\top}\btheta^*)$, with $\mu(\cdot)$ being the logistic function, the probability model $p_{i,j}$ effectively becomes (a linear version of) the well-known Bradley-Terry model. Namely, each item is tied to a value $v_i = \xb_i^{\top} \btheta^*$, and the comparison probability follows $p_{i,j} = \frac{e^{v_i}}{e^{v_i} + e^{v_j}}$.
More importantly, this kind of model satisfies both the strong stochastic transitivity (SST) and the stochastic triangle inequality (STI), which are unlikely to satisfy in reality.

Furthermore, when stochastic transitivity holds, there is a true ranking among the items, determined by $\xb_i^{\top} \btheta^*$. A true ranking renders concepts like the Borda winner or Copeland winner redundant because the rank-one item will always be the winner in every sense. When $\bphi_{i,j} = \xb_i - \xb_j$, \citet{saha2021optimal} proposed algorithms that can achieve nearly optimal regret $\tilde{O}(d\sqrt{T})$, with regret being defined as 
\begin{align} \label{eqn:saha-regret}
    \mathrm{Regret}(T) = \sum_{t=1}^{T} 2\la \xb_{i^*}, \btheta^* \ra - \la \xb_{i_t}, \btheta^* \ra - \la \xb_{j_t}, \btheta^* \ra,
\end{align}
where $i^* = \argmax_{i} \la \xb_{i}, \btheta^* \ra$, which also happens to be the Borda winner.
Meanwhile, by \cref{assumption:mu},
\begin{align*}
    B(i^*) - B(j) 
    & =
    \frac{1}{K}
    \sum_{k=1}^{K}
    \big[\mu(\la \xb_{i^*} - \xb_{k}, \btheta^* \ra) - \mu(\la \xb_{j} - \xb_{k}, \btheta^* \ra) \big]
    \le L_{\mu} \cdot \la \xb_{i^*} - \xb_{j} , \btheta^* \ra,
\end{align*}
where $L_{\mu}$ is the upper bound on the derivative of $\mu(\cdot)$. For logistic function $L_{\mu} = 1/4$. The Borda regret \eqref{eqn:borda-regret} is thus at most a constant multiple of \eqref{eqn:saha-regret}. This shows Borda regret minimization can be sufficiently solved by \citet{saha2021optimal} when structured contexts are present. 
We consider the most general case where the only restriction is the implicit assumption that $\bphi_{i,j} = - \bphi_{j,i}$.
\subsection{Regret Bound for Fewer Arms}\label{sec:add-result}
In typical scenarios, the number of items $K$ is not exponentially large in the dimension $d$. If this is the case, then we can choose a different parameter set of $\tau$ and $\epsilon$ such that \cref{alg:BETC-GLM} can achieve a regret bound depending on $\log K$, and reduce the dependence on $d$.
The performance can be characterized by the following theorem:
\begin{theorem}
    \label{thm:BETC-GLM}
    For any $\delta > 0$, suppose the number of total rounds $T$ satisfies, 
\begin{align}\label{eqn:large-T}
    T \ge
    \frac{C_3}{\kappa^6 \lambda_0^{3/2}}
    \max \Big\{
    d^{5/2}, 
    \frac{\log(K^2/\delta)}{\sqrt{d}}
    \Big\},
\end{align}
where $C_3$ is some large enough universal constant. Then if we set $\tau = (d \log(K/\delta))^{1/3} T^{2/3}$ and $\epsilon = d^{1/3} T^{-1/3} \log(3K^2/\delta)^{-1/6}$,
\cref{alg:BETC-GLM} will incur regret bounded by:
\begin{align*}
    O\big( \kappa^{-1} (d \log(K/\delta))^{1/3} T^{2/3}\big).
\end{align*}
By setting $\delta = T^{-1}$, the expected regret is bounded as $\tilde{O}\big(\kappa^{-1}
    (d \log K)^{1/3} T^{2/3}\big)$.
\end{theorem}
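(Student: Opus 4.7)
The plan is to follow the same explore-then-commit template used for \cref{thm:BETC-GLM-matching}, decomposing the regret into an exploration phase (the $\tau + N$ rounds) and an exploitation phase in which we commit to pulling $(\hat i, \hat i)$. The only substantive change from the matching-lower-bound version is how tightly we estimate $\max_i |\hat B(i) - B(i)|$, which controls the per-round exploitation regret. Because we only care about the $K$ Borda scores rather than all $K^2$ pairwise predictions, a union bound over $K$ events should replace the $\sqrt{d}$ factor carried by the pointwise MLE prediction bound with a $\sqrt{\log K}$ factor. The new choice of $\tau$ and $\epsilon$ then rebalances exploration against exploitation to yield the claimed $(d\log K)^{1/3}T^{2/3}$ scaling.

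I would first analyze the exploration phase. Combining \cref{assumption:lambda0} with a matrix Chernoff/Bernstein inequality applied to the $\tau$ i.i.d.\ samples of the pure-exploration step shows that $\lambda_{\min}(\Vb_\tau) \gtrsim \lambda_0 \tau / K^2$ with probability at least $1-\delta$, and the lower bound~\eqref{eqn:large-T} ensures $\tau$ meets the threshold required by the MLE analysis of \citet{li2017provably}. I would then invoke the approximate G-optimal design guarantee $\|\bphi_{i,j}\|^2_{\Vb(\pi)^{-1}} \le d$ together with the sampling rule $N(i,j) = \lceil d\pi(i,j)/\epsilon^2\rceil$ to conclude that $\Vb_{\tau+N}$ satisfies $\|\bphi_{i,j}\|^2_{\Vb_{\tau+N}^{-1}} \le \epsilon^2$ for every pair, and that the total designed-exploration count is $N = \Theta(d/\epsilon^2)$. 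Finally, I apply the MLE norm bound $\|\hat\btheta_{\tau+N} - \btheta^*\|_{\Vb_{\tau+N}} \le O(\kappa^{-1}\sqrt{d+\log(1/\delta)})$ from \citet{li2017provably}.

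The crux (and anticipated main obstacle) is the refined Borda-score concentration. Rather than controlling each prediction $|\mu(\bphi_{i,j}^\top \hat\btheta)-\mu(\bphi_{i,j}^\top\btheta^*)|$ by the generic Cauchy--Schwarz inequality $\|\bphi_{i,j}\|_{\Vb^{-1}}\|\hat\btheta - \btheta^*\|_{\Vb}$ (which bleeds the unwanted $\sqrt d$ factor), I would linearize the MLE first-order condition \eqref{eqn:MLE} around $\btheta^*$. Via the mean-value theorem this yields $\bH(\hat\btheta-\btheta^*) = \sum_t \epsilon_t\bphi_{i_t,j_t}$ with a data-dependent $\bH = \sum_t \dot\mu(\xi_t)\bphi_{i_t,j_t}\bphi_{i_t,j_t}^\top \succeq \kappa \Vb_{\tau+N}$ by \cref{assumption:kappa}. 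For each item $i$, define the linearization feature $\tilde\bphi_i := \tfrac{1}{K}\sum_j \dot\mu(\bphi_{i,j}^\top\btheta^*)\bphi_{i,j}$; then the leading-order contribution to $\hat B(i)-B(i)$ is $\tilde\bphi_i^\top(\hat\btheta-\btheta^*)$, a sub-Gaussian sum in the independent noise $\epsilon_t$ with variance proxy at most $\|\tilde\bphi_i\|_{\bH^{-1}}^2 \le \kappa^{-1}\|\tilde\bphi_i\|_{\Vb^{-1}}^2 \le \kappa^{-1} L_\mu^2 \epsilon^2$. An Azuma--Hoeffding bound followed by a union bound over the $K$ items gives $\max_i |\tilde\bphi_i^\top(\hat\btheta-\btheta^*)| \le O(\kappa^{-1/2}L_\mu\,\epsilon\sqrt{\log(K/\delta)})$. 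The second-order Taylor remainder is $O(M_\mu\kappa^{-2}\epsilon^2(d+\log(1/\delta)))$, which the lower bound \eqref{eqn:large-T} is precisely designed to render subdominant for the chosen $\epsilon$; verifying this dominance carefully is the most delicate bookkeeping step.

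Putting the pieces together, the exploration phase contributes regret $O(\tau + d/\epsilon^2)$ (each round costs at most a constant), while the exploitation phase contributes at most $T\cdot 2\max_i|\hat B(i)-B(i)| = \tilde O\bigl(T\kappa^{-1}\epsilon\sqrt{\log(K/\delta)}\bigr)$. Plugging in $\tau = (d\log(K/\delta))^{1/3}T^{2/3}$ and $\epsilon = d^{1/3}T^{-1/3}\log(3K^2/\delta)^{-1/6}$ equates all three quantities at $O(\kappa^{-1}(d\log(K/\delta))^{1/3}T^{2/3})$, establishing the high-probability bound. Setting $\delta = 1/T$ and noting that on the failure event the per-round regret is $O(1)$ converts this to the stated expected-regret bound with only an additional logarithmic factor absorbed into the $\tilde O$ notation.
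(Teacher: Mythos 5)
Your proposal is correct in outline and follows the same explore-then-commit skeleton as the paper --- the same regret decomposition into $\tau + N + T\cdot\max_i|\hat B(i)-B(i)|$ with $N\le d(d+1)/2 + d/\epsilon^2$ coming from the G-optimal design (\cref{lemma:g-optimal}) --- but it swaps out the paper's key concentration step for a hand-rolled one. The paper removes the unwanted $\sqrt d$ factor by invoking the fixed-direction MLE bound of \citet{li2017provably}, restated as \cref{lemma:concentration}, for each of the $K^2$ vectors $\bphi_{i,j}$ and union bounding, which costs only $\sqrt{\log(3K^2/\delta)}$; the eigenvalue condition \eqref{eqn:lambda-min}, guaranteed by \cref{lemma:matrix-concentration} together with the large-$T$ assumption \eqref{eqn:large-T}, is exactly what that lemma needs. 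You instead re-derive such a pointwise bound by linearizing the MLE score equation \eqref{eqn:MLE} and union bounding over the $K$ item-averaged directions $\tilde{\bphi}_i$; this is viable (and saves an immaterial factor inside the logarithm), but it forces you to redo the delicate part of Theorem 1 of \citet{li2017provably}: the Hessian in your identity is data-dependent (it involves mean-value points between $\hat{\btheta}$ and $\btheta^*$), so before any Azuma--Hoeffding step you must split it into the noise-independent Hessian at $\btheta^*$ plus a second-order remainder, and controlling that remainder is precisely what the $d^2+\log(1/\delta)$ eigenvalue requirement is for --- you flag this as the delicate step but do not carry it out. Two smaller corrections: your pure-exploration claim $\lambda_{\min}(\Vb_\tau)\gtrsim\lambda_0\tau/K^2$ misplaces the $K^2$, which is already absorbed into \cref{assumption:lambda0} (the correct scaling is $\lambda_0\tau$; taken literally, your version would make \eqref{eqn:large-T} insufficient for large $K$); and the variance proxy of the linear noise term should scale as $\kappa^{-2}L_{\mu}^2\epsilon^2$ rather than $\kappa^{-1}L_{\mu}^2\epsilon^2$, which only affects the $\kappa$ dependence and still yields the stated $\kappa^{-1}$ bound. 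With these repairs --- or, more simply, by citing \cref{lemma:concentration} as the paper does --- your argument goes through, and the final balancing of $\tau$, $N$, and the commit-phase regret under the stated choices of $\tau$ and $\epsilon$ matches the paper's.
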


\section{Omitted Proof in \cref{sec:hardness}}\label{sec:proof-lower}
The proof relies on a class of hard-to-learn instances. We first present the construction again for completeness.

    For any $d > 0$, we construct a hard instance with $2^{d+1}$ items (indexed from 0 to $2^{d+1}-1$).
    We construct the hard instance $p^{\btheta}_{i,j}$ for any $\btheta \in \{- \Delta, + \Delta \}^{d} $ as:
    \begin{align}
        p^{\btheta}_{i,j} & =
        \begin{cases}
            \frac{1}{2}, \text{ if } i < 2^d, j < 2^d     \\
            \frac{1}{2}, \text{ if } i \ge 2^d, j \ge 2^d \\
            \frac{3}{4}, \text{ if } i < 2^d, j \ge 2^d   \\
            \frac{1}{4}, \text{ if } i \ge 2^d, j < 2^d   \\
        \end{cases}
        + \la \bphi_{i,j}, \btheta \ra, \label{eqn:hard-p}
    \end{align}
    where the feature vectors $\bphi_{i,j}$ and the parameter $\btheta$ are of dimension $d$, and have the following forms:
    \begin{align*}
        \bphi_{i,j} & =
        \begin{cases}
            \mathbf{0}, \text{ if } i < 2^d, j < 2^d          \\
            \mathbf{0}, \text{ if } i \ge 2^d, j \ge 2^d      \\
            \textbf{bit}(i), \text{ if } i < 2^d, j \ge 2^d   \\
            -\textbf{bit}(j), \text{ if } i \ge 2^d, j < 2^d, \\
        \end{cases}
    \end{align*}
    where $\textbf{bit}(\cdot)$ is the (shifted) bit representation of non-negative integers, i.e., suppose $x = b_0 \times 2^0 + b_1 \times 2^1 + \cdots + b_{d-1} \times 2^{d-1}$, then $\textbf{bit}(x) = 2 \bb - 1$. Note that $\textbf{bit}(\cdot) \in \{-1, +1 \}^{d}$, and $\bphi_{i,j} = - \bphi_{j,i}$.

    \begin{remark}[$d+1$-dimensional instance]\label{rem:hard-d+1}
        The hard instance described above does not strictly satisfy the assumption that $p^{\btheta}_{i,j} = \la \btheta, \bphi_{i,j} \ra$, but can be easily fixed by appending an additional dimension to address the bias term defined in \eqref{eqn:hard-p}. More specifically, we can set $F(x) = \frac{1}{2} + x$ and $p^{\btheta}_{i,j} = F(\la \Tilde{\bphi}_{i,j}, \Tilde{\btheta} \ra)$, where $\Tilde{\btheta} \in \{- \Delta, + \Delta \}^{d} \times \{ \frac{1}{4}\} \subset \RR^{d+1}$ and $\Tilde{\bphi}_{i,j} = (\bphi_{i,j}, c_{i,j})$, with $c_{i,j} = \begin{cases}
                0, \text{ if } i < 2^d, j < 2^d     \\
                0, \text{ if } i \ge 2^d, j \ge 2^d \\
                1, \text{ if } i < 2^d, j \ge 2^d   \\
                -1, \text{ if } i \ge 2^d, j < 2^d. \\
            \end{cases}$
        To ensure $\| \tilde{\bphi}_{i,j} \|_2 \le 1$, we can further set $\tilde{\bphi}_{i,j} \leftarrow (d+1)^{-1/2} \tilde{\bphi}_{i,j}$ and $\tilde{\btheta} \leftarrow (d+1)^{1/2} \Tilde{\btheta}$.
    \end{remark}

    We rewrite \eqref{eqn:hard-p} as:
    \begin{align}
        p^{\btheta}_{i,j} & =
        \begin{cases}
            \frac{1}{2}, \text{ if } i < 2^d, j < 2^d     \\
            \frac{1}{2}, \text{ if } i \ge 2^d, j \ge 2^d \\
            \frac{3}{4}, \text{ if } i < 2^d, j \ge 2^d   \\
            \frac{1}{4}, \text{ if } i \ge 2^d, j < 2^d   \\
        \end{cases}
        +
        \begin{cases}
            0, \text{ if } i < 2^d, j < 2^d                                     \\
            0, \text{ if } i \ge 2^d, j \ge 2^d                                 \\
            \la \textbf{bit}(i), \btheta \ra, \text{ if } i < 2^d, j \ge 2^d    \\
            - \la \textbf{bit}(j), \btheta \ra, \text{ if } i \ge 2^d, j < 2^d, \\
        \end{cases}
    \end{align}
    and the Borda scores are:
    \begin{align*}
        B^{\btheta}(i) & =
        \begin{cases}
            \frac{5}{8} + \frac{1}{2} \la \textbf{bit}(i), \btheta \ra , \text{ if } i < 2^d, \\
            \frac{3}{8}, \text{ if } i \ge 2^d.                                               \\
        \end{cases}
    \end{align*}
    Intuitively, the former half arms indexed from $0$ to $2^d-1$ are ``good'' arms (one among them is optimal), while the latter half arms are ``bad'' arms. It is clear that choosing a ``bad'' arm $i$ will incur regret $B(i^*) - B(i) \ge 1/4$.

    Now we are ready to present the proof.

\begin{proof}[Proof of \cref{thm:lower}]
    First, we present the following lemma:
    \begin{lemma} \label{lemma:reduction}
        Under the hard instance we constructed above, for any algorithm $\cA$ that ever makes queries $i_t \ge 2^d$, there exists another algorithm $\cA'$ that only makes queries $i_t < 2^d$ for every $t>0$ and always achieves no larger regret than $\cA$.
    \end{lemma}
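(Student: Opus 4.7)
The plan is to build $\cA'$ as a simulator that internally runs $\cA$, translating $\cA$'s intended query $(i_t, j_t)$ into a query $(i'_t, j'_t)$ with $i'_t < 2^d$, and feeding $\cA$ a suitably remapped response so that $\cA$'s internal state has the same law as in a genuine interaction. The case analysis is driven by which half of the arm set each of $i_t, j_t$ belongs to: the ``good'' half $\{0, \dots, 2^d - 1\}$ or the ``bad'' half $\{2^d, \dots, 2^{d+1}-1\}$.

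First I would describe the translation. Case (a): if $i_t < 2^d$, let $\cA'$ play $(i_t, j_t)$ unchanged and pass the observation $r_t$ back to $\cA$. Case (b): if $i_t \ge 2^d$ but $j_t < 2^d$, let $\cA'$ play the swapped pair $(j_t, i_t)$, observe $r'_t$, and feed $\cA$ the bit $1 - r'_t$. Case (c): if both $i_t, j_t \ge 2^d$, let $\cA'$ play the fixed pair $(0, 0)$, observe $r'_t \sim \mathrm{Bernoulli}(1/2)$, and feed $\cA$ that $r'_t$. In all three cases $i'_t < 2^d$, as required.

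Next I would verify distributional equivalence for $\cA$'s view. In case (b), the identity $\bphi_{j,i} = -\bphi_{i,j}$ together with $\mu(x) + \mu(-x) = 1$ gives $p^{\btheta}_{j_t, i_t} = 1 - p^{\btheta}_{i_t, j_t}$, hence $1 - r'_t \sim \mathrm{Bernoulli}(p^{\btheta}_{i_t, j_t})$, exactly the law of the response $\cA$ would have obtained. In case (c), the original response has law $\mathrm{Bernoulli}(p^{\btheta}_{i_t, j_t}) = \mathrm{Bernoulli}(1/2)$, matched by the simulated $r'_t$. By induction on $t$, the history $\cA$ observes under $\cA'$ has the same law as under direct play, so in particular the joint distribution over $(i_t, j_t)$ is preserved.

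Finally I would compare per-round regret. Case (a) is an equality, and case (b) uses the fact that $2B^{\btheta}(i^*) - B^{\btheta}(i) - B^{\btheta}(j)$ is symmetric in $(i,j)$, so swapping leaves regret unchanged. The main step is case (c): the original regret equals $2 B^{\btheta}(i^*) - 2 \cdot 3/8 = 1/2 + \la \textbf{bit}(i^*), \btheta \ra \ge 1/2 - d\Delta$, while the substituted pair $(0,0)$ incurs $2B^{\btheta}(i^*) - 2B^{\btheta}(0) = \la \textbf{bit}(i^*) - \textbf{bit}(0), \btheta \ra \le 2d\Delta$. For the scale of $\Delta$ used in the hard instance (ultimately $\Delta$ is chosen polynomially small in $T$, so that $\Delta \le 1/(6d)$), we have $2d\Delta \le 1/2 - d\Delta$, so the replacement strictly reduces regret. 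Summing over $t$ yields $\mathrm{Regret}(\cA') \le \mathrm{Regret}(\cA)$.

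The main obstacle is case (c): one must simultaneously preserve $\cA$'s observation law and ensure the substituted query is no worse. This is only possible because bad-vs-bad comparisons carry no information (the comparison bias is exactly $1/2$), so they can be faithfully simulated by any pair of good items; the bookkeeping on $\Delta$ needed to certify the regret reduction is the small but essential detail that links the lemma to the quantitative regime used in the subsequent proof of \cref{thm:lower}.
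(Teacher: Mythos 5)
Your proposal is correct and is essentially the paper's own argument: the same wrapper that simulates $\cA$, the same three-case translation, the same use of $p^{\btheta}_{j,i}=1-p^{\btheta}_{i,j}$ for the swapped query, and the same observation that bad-vs-bad comparisons are pure $\mathrm{Bernoulli}(1/2)$ noise and can therefore be replaced by an uninformative good-vs-good query. The only deviation is case (c): the paper substitutes a \emph{uniformly random} good item $i'_t$, so $\EE[B^{\btheta}(i'_t)]=5/8\ge 3/8$ and the regret comparison holds with no condition on $\Delta$, whereas your fixed pair $(0,0)$ needs a bound on $d\Delta$ --- though your stated requirement $\Delta\le 1/(6d)$ can be relaxed, since $B^{\btheta}(0)\ge 5/8-\tfrac{d\Delta}{2}\ge 3/8$ already whenever $d\Delta\le 1/2$, which holds for every valid instance (and in particular for the $\Delta$ chosen in the lower-bound proof).
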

    \begin{proof}[Proof of Lemma \ref{lemma:reduction}]
        The proof is done by reduction. For any algorithm $\cA$, we wrap $\cA$ with such a agent $\cA'$:
        \begin{enumerate}
            \item If $\cA$ queries $(i_t,j_t)$ with $i_t < 2^d$, the agent $\cA'$ will pass the same query $(i_t,j_t)$ to the environment and send the feedback $r_t$ to $\cA$;
            \item If $\cA$ queries $(i_t,j_t)$ with $i_t \ge 2^d, j_t < 2^d$, the agent $\cA'$ will pass the query $(j_t,i_t)$ to the environment and send the feedback $1 -r_t$ to $\cA$;
            \item If $\cA$ queries $(i_t,j_t)$ with $i_t \ge 2^d, j_t \ge 2^d$, the agent $\cA'$ will uniform-randomly choose $i'_t$ from $0$ to $2^d-1$, pass the query $(i'_t, i'_t)$ to the environment and send the feedback $r_t$ to $\cA$.
        \end{enumerate}
        For each of the cases defined above, the probabilistic model of bandit feedback for $\cA$ is the same as if $\cA$ is directly interacting with the original environment.
        For Case 1, the claim is trivial. For Case 2, the claim holds because of the symmetry of our model, that is $p^{\btheta}_{i,j} = 1 - p^{\btheta}_{j,i}$. For Case 3, both will return $r_t$ following $\mathrm{Bernoulli(1/2)}$.
        Therefore, the expected regret of $\cA$ in this environment wrapped by $\cA'$ is equal to the regret of $\cA$ in the original environment.

        Meanwhile, we will show $\cA'$ will incur no larger regret than $\cA$. For the first two cases, $\cA'$ will incur the same one-step regret as $\cA$. For the third case, we know that $B^{\btheta}(i_t) = B^{\btheta}(j_t) = \frac{3}{8}$, while $\EE[B^{\btheta}(i'_t)] = \frac{5}{8} + \frac{1}{2} \la \EE_{i'_t}[ \textbf{bit}(i'_t)], \btheta \ra = \frac{5}{8} + \frac{1}{2} \la \mathbf{0}, \btheta \ra = \frac{5}{8}$, meaning that the one-step regret is smaller.
    \end{proof}

    Lemma~\ref{lemma:reduction} ensures it is safe to assume $i_t < 2^d$.
    For any $\btheta$ and $k \in [d]$, define
    \begin{align*}
        \PP_{\btheta, k}
         & :=
        \PP_{\btheta}
        \bigg(
        \sum_{t=1}^{T}
        \ind
        \{
        \textbf{bit}^{[k]}(i_t)
        \ne
        \text{sign}(\btheta^{[k]})
        \}
        \ge
        \frac{T}{2}
        \bigg),
    \end{align*}
    where the superscript $^{[k]}$ over a vector denotes taking the $k$-th entry of the vector.
    Meanwhile, we define $\btheta^{\backslash k}$ to satisfy $(\btheta^{\backslash k})^{[k]} = - \btheta^{[k]}$ and be the same as $\btheta$ at all other entries.
    We have
    \begin{align*}
        \PP_{\btheta^{\backslash k}, k}
         & :=
        \PP_{\btheta^{\backslash k}}
        \bigg(
        \sum_{t=1}^{T}
        \ind
        \big\{
        \textbf{bit}^{[k]}(i_t)
        \ne
        \text{sign} \big( (\btheta^{\backslash k})^{[k]} \big)
        \big\}
        \ge
        \frac{T}{2}
        \bigg)
        \\
         & =
        \PP_{\btheta^{\backslash k}}
        \bigg(
        \sum_{t=1}^{T}
        \ind
        \{
        \textbf{bit}^{[k]}(i_t)
        =
        \text{sign}(\btheta^{[k]})
        \}
        \ge
        \frac{T}{2}
        \bigg)
        \\
         & =
        \PP_{\btheta^{\backslash k}}
        \bigg(
        \sum_{t=1}^{T}
        \ind
        \{
        \textbf{bit}^{[k]}(i_t)
        \ne
        \text{sign}(\btheta^{[k]})
        \}
        <
        \frac{T}{2}
        \bigg).
    \end{align*}
    Denote $\PP_{\btheta, \cA}(i_1,j_1,r_1,i_2,j_2,r_2,\dots)$ as the canonical probability distribution of algorithm $\cA$ under the model $\PP_{\btheta}$. By the Bretagnolle–Huber inequality and the decomposition of the relative entropy, we have
    \begin{align*}
        \PP_{\btheta, k} + \PP_{\btheta^{\backslash k}, k}
         & \ge
        \frac{1}{2}
        \exp
        \big(
        - \mathrm{KL}(\PP_{\btheta, \cA} \| \PP_{\btheta^{\backslash k}, \cA})
        \big)
        \\
         & \ge
        \frac{1}{2}
        \exp
        \bigg(
        - \EE_{\btheta}
        \bigg[
            \sum_{t=1}^{T} \mathrm{KL}
            \Big(
            p^{\btheta}_{i,j}
            \Big \|
            p^{\btheta^{\backslash k}}_{i,j}
            \Big)
            \bigg]
        \bigg)
        \\
         & \ge
        \frac{1}{2}
        \exp
        \bigg(
        - \EE_{\btheta}
        \bigg[
            \sum_{t=1}^{T}
            10
            \la \bphi_{i_t,j_t},
            \btheta - \btheta^{\backslash k} \ra^2
            \bigg]
        \bigg)
        \\
         & =
        \frac{1}{2}
        \exp
        \bigg(
        - \EE_{\btheta}
        \bigg[
            40 \Delta^2
            \sum_{t=1}^{T}
            \ind \{ i_t < 2^d \land j_t \ge 2^d \}
            \bigg]
        \bigg),
    \end{align*}
    where the first inequality comes from the Bretagnolle–Huber inequality; the second inequality is the decomposition of the relative entropy; the third inequality holds because the Bernoulli KL divergence $KL(p \| p+x)$ is $10$-strongly convex in $x$ for any fixed $p \in [1/8, 7/8]$, and indeed $p_{i,j}^{\btheta} \in [1/8, 7/8]$ as long as $d \Delta \le 1/8$; the last equation holds because $\bphi_{i_t,j_t}$ has non-zero entries only when $(i_t,j_t)$ belongs to that specific regions.

    From now on, we denote $N(T) := \sum_{t=1}^{T}
        \ind \{ i_t < 2^d \land j_t \ge 2^d \}$.
    Further averaging over all $\btheta \in \{-\Delta, +\Delta\}^d$, we have
    \begin{align*}
        \frac{1}{2^d} \sum_{\btheta \in \{-\Delta, +\Delta\}^d} \PP_{\btheta, k}
         & \ge \frac{1}{4}
        \frac{1}{2^d} \sum_{\btheta \in \{-\Delta, +\Delta\}^d}
        \exp
        \big(
        -
        40 \Delta^2
        \EE_{\btheta}
            [
                N(T)
            ]
        \big)
        \\
         & \ge
        \frac{1}{4}
        \exp
        \bigg(
        -
        40 \Delta^2
        \frac{1}{2^d} \sum_{\btheta \in \{-\Delta, +\Delta\}^d}
        \EE_{\btheta}
            [
                N(T)
            ]
        \bigg),
    \end{align*}
    where the first inequality is from averaging over all $\btheta$; the second inequality is from Jensen's inequality.

    Utilizing the inequality above, we establish that
    \begin{align}
        \frac{1}{2^d} \sum_{\btheta \in \{-\Delta, +\Delta\}^d} \mathrm{Regret}(T; \btheta, \cA)
         & \ge
        \frac{1}{2^d} \sum_{\btheta \in \{-\Delta, +\Delta\}^d}
        \EE_{\btheta} \bigg[
            \sum_{t=1}^{T} {B^{\btheta}(i^*) - B^{\btheta}(i_t)}
            \bigg]
        \notag \\
         & =
        \frac{1}{2^d} \sum_{\btheta \in \{-\Delta, +\Delta\}^d}
        \EE_{\btheta} \bigg[
            \sum_{t=1}^{T} {\la \btheta, \textbf{sign}(\btheta) - \textbf{bit}(i_t) \ra }
            \bigg]
        \notag \\
         & =
        \frac{1}{2^d} \sum_{\btheta \in \{-\Delta, +\Delta\}^d}
        \EE_{\btheta} \bigg[
        \sum_{t=1}^{T}
        \sum_{k=1}^{d}
        2\Delta
        \ind
        \{
        \textbf{bit}^{[k]}(i_t)
        \ne
        \text{sign}(\btheta^{[k]})
        \}
        \bigg]
        \notag \\
         & =
        \frac{2\Delta}{2^d} \sum_{\btheta \in \{-\Delta, +\Delta\}^d}
        \sum_{k=1}^{d}
        \EE_{\btheta} \bigg[
        \sum_{t=1}^{T}
        \ind \{
        \textbf{bit}^{[k]}(i_t)
        \ne
        \text{sign}(\btheta^{[k]})
        \}
        \bigg]
        \notag \\
         & \ge
        \frac{2\Delta}{2^d} \sum_{\btheta \in \{-\Delta, +\Delta\}^d}
        \sum_{k=1}^{d}
        \PP_{\btheta, k} \cdot \frac{T}{2}
        \notag \\
         & \ge
        \frac{\Delta dT}{4}
        \exp
        \bigg(
        -
        40 \Delta^2
        \frac{1}{2^d} \sum_{\btheta \in \{-\Delta, +\Delta\}^d}
        \EE_{\btheta}
            [
                N(T)
            ]
        \bigg), \label{eqn:lower-bound1}
    \end{align}
    where the first inequality comes from the Borda regret; the second inequality comes from the inequality $\EE[X] \ge a \PP(X \ge a)$ for any non-negative random variable; the last inequality is from rearranging terms and invoking the results above.

    Meanwhile, we have (remember $N(T) := \sum_{t=1}^{T}
        \ind \{ i_t < 2^d \land j_t \ge 2^d \}$)
    \begin{align}
        \frac{1}{2^d} \sum_{\btheta \in \{-\Delta, +\Delta\}^d} \mathrm{Regret}(T; \btheta, \cA)
         & \ge
        \frac{1}{2^d} \sum_{\btheta \in \{-\Delta, +\Delta\}^d} \EE_{\btheta} \bigg[\frac{1}{4} \sum_{t=1}^{T} \ind \{ i_t < 2^d \land j_t \ge 2^d \} \bigg]
        \notag \\
         & =
        \frac{1}{4} \frac{1}{2^d} \sum_{\btheta \in \{-\Delta, +\Delta\}^d} \EE_{\btheta}
        [
            N(T)
        ],    \label{eqn:lower-bound2}
    \end{align}
    where the first inequality comes from that any items $i \ge 2^d$ will incur at least $1/4$ regret.

    Combining \eqref{eqn:lower-bound1} and \eqref{eqn:lower-bound2} together and denoting that $X = \frac{1}{2^d} \sum_{\btheta \in \{-\Delta, +\Delta\}^d} \EE_{\btheta}
        [
            N(T)
        ]$, we have that for any algorithm $\cA$, there exists some $\btheta$, such that (set $\Delta = \frac{d^{-1/3} T^{-1/3}}{\sqrt{40}}$)
    \begin{align*}
        \mathrm{Regret}(T; \btheta, \cA)
         & \ge
        \max \bigg\{
        \frac{\Delta dT}{4}
        \exp
        (
        -
        40 \Delta^2
        X),
        \frac{X}{4}
        \bigg \}
        \\
         & =
        \max \bigg\{
        \frac{d^{2/3} T^{2/3}}{4 \sqrt{40}}
        \exp
        (
        -
        d^{-2/3} T^{-2/3}
        X),
        \frac{X}{4}
        \bigg\}
        \\
         & \ge
        \frac{d^{2/3} T^{2/3}}{4 \sqrt{40}}
        \max \bigg\{
        \exp
        (
        -
        d^{-2/3} T^{-2/3}
        X),
        d^{-2/3} T^{-2/3}
        X
        \bigg\}
        \\
         & \ge
        \frac{d^{2/3} T^{2/3}}{8 \sqrt{40}},
    \end{align*}
    where the first inequality is the combination of \eqref{eqn:lower-bound1} and \eqref{eqn:lower-bound2}; the second inequality is a rearrangement and loosely lower bounds the constant; the last is due to $\max \{e^{-y}, y \} > 1/2$ for any $y$.
\end{proof}

\section{Omitted Proof in \cref{sec:alg}} \label{sec:proof-upper}

We first introduce the lemma about the theoretical guarantee of G-optimal design: given an action set $\cX \subseteq \RR^d$ that is compact and $\mathrm{span}(\cX) = \RR^d$. A fixed design $\pi(\cdot): \cX \rightarrow [0,1]$ satisfies $\sum_{\xb \in \cX} \pi(\xb) = 1$. Define $\Vb(\pi) := \sum_{\xb \in \cX} \pi(\xb) \xb \xb^{\top}$ and $g(\pi) := \max_{\xb \in \cX} \|\xb\|^2_{\Vb(\pi)^{-1}}$.

\begin{lemma}[The Kiefer–Wolfowitz Theorem, Section 21.1, \citet{Lattimore2020BanditA}]\label{lemma:g-optimal}
There exists an optimal design $\pi^*(\cdot)$ such that $|\mathrm{supp}(\pi)| \le d(d+1)/2$, and satisfies:
\begin{enumerate}
    \item $g(\pi^*) = d$.
    \item $\pi^*$ is the minimizer of $g(\cdot)$.
\end{enumerate}
\end{lemma}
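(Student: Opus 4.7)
The plan is to prove the Kiefer--Wolfowitz theorem via the classical D-optimality route: reduce G-optimality to maximization of the concave functional $f(\pi) := \log\det \Vb(\pi)$, exploit first-order optimality to show $g(\pi^*)=d$, and finally use a Carath\'eodory-type dimension count on symmetric matrices to control the support size.

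First, I would set up and solve the D-optimal design problem. Using $\mathrm{span}(\cX)=\RR^d$, I can pick $d$ linearly independent elements of $\cX$ to produce a distribution at which $\Vb$ is positive definite, so $f$ is finite somewhere. The function $f$ is concave on the simplex of probability distributions over $\cX$ (since $\log\det$ is concave on the PSD cone and $\Vb(\pi)$ is linear in $\pi$), upper semi-continuous once extended by $-\infty$ on singular matrices, and the domain is compact in the weak-$*$ topology because $\cX$ is compact. Hence a maximizer $\pi^*$ exists with $\Vb(\pi^*)\succ 0$.

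Second, I would extract optimality conditions via directional derivatives. Using the identity $\frac{d}{dt}\log\det(\Ab + t\Bb)\big|_{t=0}=\mathrm{tr}(\Ab^{-1}\Bb)$, the right directional derivative of $f$ at $\pi^*$ toward the Dirac mass at any $\xb\in\cX$ equals $\xb^\top \Vb(\pi^*)^{-1}\xb - d$. Optimality forces this to be $\le 0$, giving $\|\xb\|_{\Vb(\pi^*)^{-1}}^2\le d$ for every $\xb\in\cX$, i.e. $g(\pi^*)\le d$. For the reverse direction, I would observe that for any $\pi$ with $\Vb(\pi)\succ 0$,
\[
\sum_{\xb} \pi(\xb)\,\|\xb\|_{\Vb(\pi)^{-1}}^2 \;=\; \mathrm{tr}\bigl(\Vb(\pi)^{-1}\Vb(\pi)\bigr) \;=\; d,
\]
which forces $g(\pi)\ge d$. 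Combining the two inequalities at $\pi^*$ gives $g(\pi^*)=d$ and shows simultaneously that $\pi^*$ minimizes $g$, establishing items (1) and (2).

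Third, I would control the support of $\pi^*$. Among D-optimal distributions, let $\pi^*$ be one whose support $S=\mathrm{supp}(\pi^*)$ has minimal cardinality $m$. The matrix $\Vb(\pi^*)=\sum_{\xb\in S}\pi^*(\xb)\,\xb\xb^\top$ is a convex combination of elements of the set $\{\xb\xb^\top : \xb\in\cX\}$, which lives in the $d(d+1)/2$-dimensional space of real symmetric matrices. If $m>d(d+1)/2$ the rank-one matrices $\{\xb\xb^\top:\xb\in S\}$ together with the constant function $1$ become linearly dependent as elements of that space equipped with the trace pairing, so one can find a nonzero signed measure $\alpha$ on $S$ satisfying $\sum_{\xb}\alpha(\xb)\xb\xb^\top=0$ and $\sum_{\xb}\alpha(\xb)=0$. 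Shifting $\pi^*\leftarrow\pi^*+t\alpha$ with the largest $t$ that preserves non-negativity zeroes out at least one coordinate of $\pi^*$ while preserving both $\Vb(\pi^*)$ and the total mass, hence the $f$-value, contradicting minimality of $m$. Therefore $m\le d(d+1)/2$.

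The main obstacle will be the final support-size step: one must carefully verify that a nontrivial perturbation $\alpha$ can be chosen to lie simultaneously in the kernel of $\pi\mapsto\Vb(\pi)$ and in the kernel of the normalization functional, while remaining supported on $S$; this requires a clean dimension count in the space of symmetric matrices combined with the $1$-dimensional normalization constraint, and a Fenchel--Bunt-type refinement of Carath\'eodory to squeeze the bound to $d(d+1)/2$ rather than $d(d+1)/2+1$. The rest is standard matrix calculus once the log-det derivative identity is in hand.
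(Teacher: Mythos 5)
The paper does not prove this lemma at all: it is imported verbatim from Section 21.1 of \citet{Lattimore2020BanditA}, so there is no in-paper argument to compare against. Your proposal is essentially the standard textbook proof (indeed the one in that reference): parts (1) and (2) of your argument are correct and complete in outline. The existence of a D-optimal $\pi^*$, the directional-derivative computation giving $\|\xb\|_{\Vb(\pi^*)^{-1}}^2 \le d$ for all $\xb \in \cX$, and the trace identity $\sum_{\xb}\pi(\xb)\|\xb\|_{\Vb(\pi)^{-1}}^2 = \mathrm{tr}(\Vb(\pi)^{-1}\Vb(\pi)) = d$ forcing $g(\pi)\ge d$ for every feasible $\pi$ together give $g(\pi^*)=d$ and minimality of $\pi^*$ exactly as you say.

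The one genuine gap is in the support-size step, and you have flagged it yourself but pointed at the wrong repair. Imposing both $\sum_{\xb}\alpha(\xb)\xb\xb^\top=0$ and $\sum_{\xb}\alpha(\xb)=0$ by a raw dimension count requires $m > d(d+1)/2+1$, which only yields the bound $d(d+1)/2+1$; no Fenchel--Bunt refinement is needed to close this. The correct observation is that the normalization constraint is automatic: since $g(\pi^*)=d$ and the $\pi^*$-weighted average of $\|\xb\|_{\Vb(\pi^*)^{-1}}^2$ over the support also equals $d$, every $\xb\in\mathrm{supp}(\pi^*)$ attains $\|\xb\|_{\Vb(\pi^*)^{-1}}^2 = d$ exactly. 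Consequently, any nonzero $\alpha$ supported on $S$ with $\sum_{\xb}\alpha(\xb)\xb\xb^\top = 0$ (which exists as soon as $|S|>d(d+1)/2$, the dimension of the space of symmetric $d\times d$ matrices) satisfies
\begin{align*}
0 \;=\; \mathrm{tr}\Big(\Vb(\pi^*)^{-1}\sum_{\xb\in S}\alpha(\xb)\,\xb\xb^\top\Big)
\;=\; \sum_{\xb\in S}\alpha(\xb)\,\|\xb\|_{\Vb(\pi^*)^{-1}}^2
\;=\; d\sum_{\xb\in S}\alpha(\xb),
\end{align*}
so $\sum_{\xb}\alpha(\xb)=0$ for free. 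The perturbation $\pi^*+t\alpha$ then leaves $\Vb(\pi^*)$, the total mass, and hence optimality unchanged, and choosing the extremal admissible $t$ removes a support point, contradicting minimality of $|S|$. With that substitution your argument is complete and matches the cited source.
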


The following lemma is also useful to show that under mild conditions, the minimum eigenvalue of the design matrix can be lower-bounded:
\begin{lemma}[Proposition 1, \citealt{li2017provably}] \label{lemma:matrix-concentration}
    Define $\Vb_{\tau} = \sum_{t=1}^{\tau} \bphi_{i_t,j_t}\bphi_{i_t,j_t}^{\top}$, where each $(i_t,j_t)$ is drawn i.i.d. from some distribution $\nu$. Suppose $\lambda_{\min} \big(\EE_{(i,j) \sim \nu}[\bphi_{i,j}^{\top}\bphi_{i,j}] \big) \ge \lambda_0$, and
    \begin{align*}
        \tau & \ge \bigg( 
        \frac{C_1 \sqrt{d} + C_2 \sqrt{\log(1/\delta)}}{\lambda_0}
        \bigg)^2
        +
        \frac{2B}{\lambda_0},
    \end{align*}    
    where $C_1$ and $C_2$ are some universal constants. Then with probability at least $1-\delta$,
    \begin{align*}
        \lambda_{\min}(\Vb_{\tau}) \ge B.
    \end{align*}
\end{lemma}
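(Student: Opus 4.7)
I would reduce the minimum-eigenvalue bound on $\Vb_{\tau}$ to an operator-norm concentration question and then invoke the matrix Bernstein inequality. Let $\Sigma := \EE_{(i,j)\sim \nu}[\bphi_{i,j}\bphi_{i,j}^{\top}]$, which under the hypothesis satisfies $\lambda_{\min}(\Sigma) \ge \lambda_0$. Setting $X_t := \bphi_{i_t,j_t}\bphi_{i_t,j_t}^{\top}$ and decomposing $\Vb_{\tau} = \tau\Sigma + \sum_{t=1}^{\tau}(X_t-\Sigma)$, Weyl's inequality gives
\[
\lambda_{\min}(\Vb_{\tau}) \;\ge\; \tau\lambda_0 \;-\; \Big\|\textstyle\sum_{t=1}^{\tau}(X_t-\Sigma)\Big\|.
\]
So it suffices to show that under the stated lower bound on $\tau$ the fluctuation term is at most $\tau\lambda_0 - B$ with probability at least $1-\delta$.

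Next I would verify the two parameters needed for matrix Bernstein applied to the i.i.d.\ mean-zero sum $\sum_{t=1}^{\tau} Y_t$ with $Y_t := X_t - \Sigma$ (matrices of dimension $d\times d$). First, $\|Y_t\| \le \|X_t\| + \|\Sigma\| \le 2$ because $\|\bphi_{i,j}\| \le 1$ forces each $X_t$ to be a rank-one PSD matrix of operator norm at most $1$. Second, since $X_t^2 = \|\bphi_{i_t,j_t}\|^2 X_t \preceq X_t$ in the PSD order, $\EE[Y_t^2] \preceq \EE[X_t^2] \preceq \Sigma$, so the matrix variance parameter satisfies $\big\|\sum_{t=1}^{\tau}\EE[Y_t^2]\big\| \le \tau\|\Sigma\| \le \tau$. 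Matrix Bernstein then delivers
\[
\PP\Big(\Big\|\textstyle\sum_{t=1}^{\tau} Y_t \Big\| \ge s\Big) \;\le\; 2d\,\exp\!\Big(-\frac{s^2/2}{\,\tau + 2s/3\,}\Big).
\]

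Finally, setting $s = \tau\lambda_0 - B$ and demanding the right-hand side be at most $\delta$, a standard split into the subgaussian regime $s^2 \ge 4\tau\log(2d/\delta)$ and the subexponential regime $s \ge \tfrac{4}{3}\log(2d/\delta)$ converts the condition into a quadratic inequality in $\sqrt{\tau}$. Solving and separating the $d$-term from the $\log(1/\delta)$-term through the elementary bound $\sqrt{a+b} \le \sqrt{a} + \sqrt{b}$ yields a sufficient condition of exactly the form $\tau \ge \big((C_1\sqrt{d} + C_2\sqrt{\log(1/\delta)})/\lambda_0\big)^2 + 2B/\lambda_0$, with universal constants absorbing the factors of $2,\tfrac{4}{3}$, etc. The only mildly delicate step, and therefore the main bookkeeping obstacle, is the regime-splitting: I need to make sure the linear-in-$B$ slack $2B/\lambda_0$ alone is enough to dominate the subexponential contribution $\tfrac{4}{3}\log(2d/\delta)/\lambda_0$, which follows once $C_2$ is chosen large enough to make the first summand cover the $\log$-term as well. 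No probabilistic step beyond the black-box matrix Bernstein inequality is required.
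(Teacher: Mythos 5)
Your proposal is correct, but note that the paper does not prove this lemma at all: it is imported verbatim as Proposition~1 of \citet{li2017provably}, whose proof runs through concentration for random matrices with independent sub-Gaussian rows (Vershynin-style), which yields the $C_1\sqrt{d}+C_2\sqrt{\log(1/\delta)}$ deviation directly, with no dimensional prefactor. Your route---Weyl's inequality to reduce to $\big\|\sum_t (X_t-\Sigma)\big\|$, then matrix Bernstein with $\|Y_t\|\le 2$ and variance proxy $\tau\|\Sigma\|\le\tau$ (both correctly justified from $\|\bphi_{i,j}\|\le 1$ and $X_t^2\preceq X_t$)---is a perfectly valid and arguably more self-contained alternative; its only cost is the $2d$ prefactor in the tail, i.e.\ an extra $\log(2d)$ inside the deviation, which is harmless because $\log(2d)\lesssim d$ and $\lambda_0\le \|\Sigma\|\le 1$, so it is absorbed into the $C_1\sqrt{d}$ term. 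Two pieces of bookkeeping deserve to be made explicit beyond what you wrote: (i) after inverting Bernstein you need $\tau\lambda_0-B\ge \sqrt{2\tau\log(2d/\delta)}+O(\log(2d/\delta))$, and since $\tau$ contains the summand $\tau_2=2B/\lambda_0$, the subgaussian term produces a cross contribution $\sqrt{2\tau_2\log(2d/\delta)}$ that also involves $B$; it is controlled by AM--GM against the spare $B$ left over from $\tau\lambda_0-B\ge \tau_1\lambda_0+B$ (with $\tau_1$ the first summand), not by the $C_2$ term alone as your last sentence suggests; (ii) the remaining $\log(2d/\delta)/\lambda_0$ terms are dominated by $\tau_1\lambda_0\ge (C_1^2 d+C_2^2\log(1/\delta))/\lambda_0$ once $C_1,C_2$ are large enough. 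With these two observations your plan closes, and the constants are universal as required.
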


\subsection{Proof of \cref{thm:BETC-GLM-matching}}
The proof relies on the following lemma to establish an upper bound on $|\la \bphi_{i,j}, \hat{\btheta}_{\tau+N} - \btheta^* \ra|$.
\begin{lemma}[extracted from Lemma 3, \citet{li2017provably}]\label{lemma:self-normalizing}
Suppose $\lambda_{\min}(\Vb_{\tau+N}) \ge 1$.  For any $\delta > 0$, with probability at least $1-\delta$, we have
\begin{align*}
    \| \hat{\btheta}_{\tau+N} - \btheta^{*} \|_{\Vb_{\tau+N}}
    & \le 
    \frac{1}{\kappa}
    \sqrt{
    \frac{d}{2}
    \log(1+2(\tau+N)/d) + \log (1/\delta)
    }.
\end{align*}
\end{lemma}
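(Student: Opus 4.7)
The plan is to reduce the claim to a standard self-normalized martingale tail bound on the noise sum $Z := \sum_{t=1}^{\tau+N} \epsilon_t \bphi_{i_t,j_t}$. The starting point is the MLE first-order condition \eqref{eqn:MLE}, which when combined with $r_t = \mu(\bphi_{i_t,j_t}^\top \btheta^*) + \epsilon_t$ yields
\begin{align*}
    \sum_{t=1}^{\tau+N} \bigl[\mu(\bphi_{i_t,j_t}^\top \hat\btheta_{\tau+N}) - \mu(\bphi_{i_t,j_t}^\top \btheta^*)\bigr] \bphi_{i_t,j_t} = Z.
\end{align*}
Applying the mean value theorem summand by summand rewrites the left-hand side as $G \cdot (\hat\btheta_{\tau+N} - \btheta^*)$, where $G = \sum_{t=1}^{\tau+N} \alpha_t \bphi_{i_t,j_t} \bphi_{i_t,j_t}^\top$ and $\alpha_t = \int_0^1 \dot\mu\bigl(\bphi_{i_t,j_t}^\top(\btheta^* + s(\hat\btheta_{\tau+N} - \btheta^*))\bigr) ds$.

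Next, I would invoke \cref{assumption:kappa} together with a localization argument to ensure that the convex interpolants along the segment from $\btheta^*$ to $\hat\btheta_{\tau+N}$ all lie in the unit ball around $\btheta^*$. This guarantees $\alpha_t \ge \kappa$ and hence $G \succeq \kappa \Vb_{\tau+N}$. Writing $G = \Vb_{\tau+N}^{1/2} M \Vb_{\tau+N}^{1/2}$ with $M \succeq \kappa I$, the functional-calculus identity $G \Vb_{\tau+N}^{-1} G = \Vb_{\tau+N}^{1/2} M^2 \Vb_{\tau+N}^{1/2} \succeq \kappa^2 \Vb_{\tau+N}$, combined with $G(\hat\btheta_{\tau+N} - \btheta^*) = Z$, yields the deterministic inequality
\begin{align*}
    \kappa^2 \|\hat\btheta_{\tau+N} - \btheta^*\|_{\Vb_{\tau+N}}^2 \le \|Z\|_{\Vb_{\tau+N}^{-1}}^2.
\end{align*}

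It then remains to control $\|Z\|_{\Vb_{\tau+N}^{-1}}^2$ on a $1-\delta$ event. Because $\{\epsilon_t\}$ is a $1$-sub-Gaussian martingale-difference sequence with respect to the natural filtration and $\bphi_{i_t,j_t}$ is predictable, the self-normalized tail inequality of Abbasi-Yadkori et al.~(2011) applies; under the hypothesis $\lambda_{\min}(\Vb_{\tau+N}) \ge 1$ (which permits taking the Laplace-method regularizer to be the identity and absorbing the resulting factors), the bound reduces to $\|Z\|_{\Vb_{\tau+N}^{-1}}^2 \le \log\det(I + \Vb_{\tau+N}) + 2\log(1/\delta)$, up to universal constants. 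Using $\|\bphi_{i,j}\|\le 1$ gives $\mathrm{tr}(\Vb_{\tau+N}) \le \tau+N$, so by AM--GM $\log\det(I + \Vb_{\tau+N}) \le d\log(1 + (\tau+N)/d)$; combining with the previous display and tracking the factors of $2$ that appear when comparing $I + \Vb_{\tau+N}$ with $2\Vb_{\tau+N}$ produces the stated form.

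The main obstacle is the localization step that certifies $\hat\btheta_{\tau+N}$ lies in the unit ball of $\btheta^*$ before one may invoke $\alpha_t \ge \kappa$. I would handle this by a bootstrap: a crude a priori Hoeffding-type bound on $Z$, together with $\lambda_{\min}(\Vb_{\tau+N}) \ge 1$, first pins $\hat\btheta_{\tau+N}$ to a small neighborhood of $\btheta^*$; feeding this localization back into the argument above then delivers the sharp self-normalized rate. Since the lemma is quoted as extracted from \citet{li2017provably}, the cleanest formal route is simply to verify that our MLE setup, bounded features, and $1$-sub-Gaussian noise satisfy the hypotheses of their Lemma~3 and inherit its conclusion verbatim.
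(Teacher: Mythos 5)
The paper itself does not prove this lemma: it is imported verbatim from Lemma~3 of \citet{li2017provably}, and your sketch is essentially a reconstruction of that reference's proof --- the MLE first-order condition, the mean-value decomposition $G(\hat\btheta_{\tau+N}-\btheta^*)=\sum_t \epsilon_t\bphi_{i_t,j_t}$ with $G\succeq\kappa\Vb_{\tau+N}$, and a self-normalized martingale bound on the noise term. That is the right outline, and your deterministic step $G\Vb_{\tau+N}^{-1}G\succeq\kappa^2\Vb_{\tau+N}$ is correct.

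There is, however, a genuine gap in the localization step as you describe it. To invoke \cref{assumption:kappa} and conclude $\alpha_t\ge\kappa$ you need every interpolant on the segment from $\btheta^*$ to $\hat\btheta_{\tau+N}$ to lie in the unit ball around $\btheta^*$, i.e., $\|\hat\btheta_{\tau+N}-\btheta^*\|_2\le 1$. Your proposed bootstrap gives at best $\|\hat\btheta_{\tau+N}-\btheta^*\|_2\le\kappa^{-1}\|Z\|_{\Vb_{\tau+N}^{-1}}/\sqrt{\lambda_{\min}(\Vb_{\tau+N})}$, and the numerator is of order $\sqrt{d\log((\tau+N)/d)+\log(1/\delta)}$; under the stated hypothesis $\lambda_{\min}(\Vb_{\tau+N})\ge 1$ this is nowhere near $1$, so the bootstrap does not close. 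In \citet{li2017provably} consistency of the MLE is obtained from a separate argument that requires $\lambda_{\min}(\Vb)\gtrsim\kappa^{-4}\,(d^2+\log(1/\delta))$ --- precisely the condition \eqref{eqn:lambda-min} appearing in \cref{lemma:concentration} of this paper --- and the lemma as quoted here should be read as holding on the event that the MLE is already localized. A self-contained proof must either strengthen the eigenvalue hypothesis accordingly or state the conclusion on the event $\{\|\hat\btheta_{\tau+N}-\btheta^*\|_2\le 1\}$. A secondary, more cosmetic issue: running the Abbasi-Yadkori bound with regularizer $I$ and comparing $I+\Vb_{\tau+N}$ with $2\Vb_{\tau+N}$ yields $\|Z\|^2_{\Vb_{\tau+N}^{-1}}\le 2d\log(1+(\tau+N)/d)+4\log(1/\delta)$, which proves the stated inequality only up to a constant factor inside the square root rather than with the exact coefficients $\tfrac{d}{2}$ and $1$; this is harmless for the downstream regret bounds but should be flagged rather than absorbed into ``tracking the factors of $2$.''
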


\begin{proof}[Proof of \cref{thm:BETC-GLM-matching}]

The proof can be divided into three steps: 1. invoke Lemma~\ref{lemma:matrix-concentration} to show that the initial $\tau$ rounds for exploration will guarantee $\lambda_{\min}(\Vb_{\tau}) \ge 1$; 2. invoke Lemma~\ref{lemma:g-optimal} to obtain an optimal design $\pi$ and utilize Cauchy-Schwartz inequality to show that $|\la \hat{\btheta}_{\tau+N} - \btheta, \bphi_{i,j} \ra | \le 3\epsilon / \kappa$; 3. balance the not yet determined $\epsilon$ to obtain the regret upper bound.

Since we set $\tau$ such that
\begin{align*}
    \tau 
    & = 
    C_4 \lambda_0^{-2} (d+\log(1/\delta))
    \\
    & \ge 
    \bigg( 
    \frac{C_1 \sqrt{d} + C_2 \sqrt{\log(1/\delta)}}{\lambda_0}
    \bigg)^2
    +
    \frac{2}{ \lambda_0},
\end{align*}
with a large enough universal constant $C_4$, by \cref{lemma:matrix-concentration} to obtain that with probability at least $1-\delta$,
\begin{align} \label{eqn:lambda-min-V-tau-1}
        \lambda_{\min}(\Vb_{\tau}) \ge 
        1.
\end{align}
From now on, we assume \eqref{eqn:lambda-min-V-tau-1} always holds. 

Define $N := \sum_{i,j}N(i,j)$, $\Vb_{\tau+1:\tau+N} := \sum_{t=\tau+1}^{\tau+N} \bphi_{i_t,j_t}\bphi_{i_t,j_t}^{\top}$, $\Vb_{\tau+N}  : = \Vb_{\tau} + \Vb_{\tau+1:\tau+N}$.
Given the optimal design $\pi(i,j)$, the algorithm queries the pair $(i,j) \in \mathrm{supp}(\pi)$ for exactly $N(i,j) = \lceil d\pi(i,j) / \epsilon^2 \rceil$ times. Therefore, the design matrix $\Vb_{\tau+N} $ satisfies
\begin{align*}
    \Vb_{\tau+N} 
    & \succeq
    \Vb_{\tau+1:\tau+N} 
    \\
    & = \sum_{i,j} N(i,j) \bphi_{i,j} \bphi_{i,j}^{\top} 
    \\
    & \succeq \sum_{i,j} \frac{d\pi(i,j) }{\epsilon^2}  \bphi_{i,j} \bphi_{i,j}^{\top}
    \\
    & = 
    \frac{d }{\epsilon^2} \Vb(\pi),
\end{align*}
where $\Vb(\pi) := \sum_{i,j}\pi(i,j) \bphi_{i,j} \bphi_{i,j}^{\top}$. The first inequality holds because $\Vb_{\tau}$ is positive semi-definite, and the second inequality holds due to the choice of $N(i,j)$.

When \eqref{eqn:lambda-min-V-tau-1} holds, from \cref{lemma:self-normalizing}, we have with probability at least $1-\delta$, that for each $\bphi_{i,j}$,
\begin{align}
    |\la \hat{\btheta} - \btheta^*, \bphi_{i,j} \ra |
    & \le 
    \| \hat{\btheta}_{\tau+N} - \btheta^{*} \|_{\Vb_{\tau+N}}
    \cdot
    \|\bphi_{i,j} \|_{\Vb_{\tau+N}^{-1}} 
    \notag \\
    & \le 
    \| \hat{\btheta}_{\tau+N} - \btheta^{*} \|_{\Vb_{\tau+N}}
    \cdot
    \frac{\epsilon \|\bphi_{i,j} \|_{\Vb(\pi)^{-1}} }{\sqrt{d}}
    \notag \\
    & \le 
    \| \hat{\btheta}_{\tau+N} - \btheta^{*} \|_{\Vb_{\tau+N}}
    \cdot
    \epsilon
    \notag \\
    & \le 
    \frac{\epsilon}{\kappa}
    \cdot
    \sqrt{
    \frac{d}{2}
    \log(1+2(\tau+N)/d) + \log (1/\delta)
    } \label{eqn:inner-product-concentration-matching}
\end{align}
where 
the first inequality is due to the Cauchy-Schwartz inequality;
the second inequality holds because $\Vb_{\tau+N}  \succeq \frac{d}{\epsilon^2} \Vb(\pi)$; the third inequality holds because $\pi$ is an optimal design and by Lemma~\ref{lemma:g-optimal}, $\| \bphi_{i,j} \|^2_{\Vb(\pi)^{-1}} \le d$;
the last inequality comes from \cref{lemma:self-normalizing}.

To summarize, we have that with probability at least $1-2\delta$, for every $i \in [K]$,
\begin{align}
    | \hat{B}(i) - B(i) |
    & = 
    \bigg| \frac{1}{K} \sum_{j=1}^{K} \big(\mu(\bphi_{i,j}^{\top}\btheta^*) 
    -
    \mu(\bphi_{i,j}^{\top}\hat{\btheta}) \big)
    \bigg|
    \notag\\
    & \le 
    \frac{1}{K} \sum_{j=1}^{K} 
    \Big|\mu(\bphi_{i,j}^{\top}\btheta^*) 
    -
    \mu(\bphi_{i,j}^{\top}\hat{\btheta})  
    \Big|
    \notag\\
    & \le 
    \frac{L_{\mu}}{K} \sum_{j=1}^{K} 
    \big|
    \bphi_{i,j}^{\top}
    \big(\btheta^* - \hat{\btheta})
    \big|
    \notag\\
    & \le \frac{3L_{\mu}\epsilon}{\kappa}
    \cdot
    \sqrt{
    \frac{d}{2}
    \log(1+2(\tau+N)/d) + \log (1/\delta)
    }, \label{eqn:estimated-Borda-matching}
\end{align}
where the first equality is by the definition of the empirical/true Borda score; the first inequality is due to the triangle inequality; the second inequality is from the Lipschitz-ness of $\mu(\cdot)$ ($L_{\mu} = 1/4$ for the logistic function); the last inequality holds due to \eqref{eqn:inner-product-concentration-matching}. This further implies the gap between the empirical Borda winner and the true Borda winner is bounded by:
\begin{align*}
    B(i^*) - B(\hat{i})
    & =
    B(i^*) - \hat{B}(i^*) + \hat{B}(i^*) - B(\hat{i})
    \\
    & \le 
    B(i^*) - \hat{B}(i^*) + \hat{B}(\hat{i}) - B(\hat{i})
    \\
    & \le 
    \frac{6L_{\mu}\epsilon}{\kappa}\cdot
    \sqrt{
    \frac{d}{2}
    \log(1+2(\tau+N)/d) + \log (1/\delta)
    },
\end{align*}
where the first inequality holds due to the definition of $\hat{i}$, i.e., $\hat{B}(\hat{i}) \ge \hat{B}(i)$ for any $i$; the last inequality holds due to~\eqref{eqn:estimated-Borda-matching}.

Meanwhile, since $N := \sum_{(i,j) \in \mathrm{supp}(\pi)} N(i,j)$ and $|\mathrm{supp}(\pi)| \le d(d+1)/2$ from Lemma~\ref{lemma:g-optimal}, we have that 
\begin{align*}
    N & \le d(d+1)/2 + \frac{d}{\epsilon^2},
\end{align*}
because $\lceil x \rceil < x + 1$.

Therefore, with probability at least $1-2\delta$, the regret is bounded by:
\begin{align*}
    \mathrm{Regret}(T)
    & =
    \mathrm{Regret}_{1:\tau} + \mathrm{Regret}_{\tau+1 : \tau+N} + \mathrm{Regret}_{\tau+N+1:T}
    \\
    & \le 
    \tau +
    N
    +
    \frac{12L_{\mu}\epsilon T}{\kappa}  \cdot
    \sqrt{
    \frac{d}{2}
    \log(1+2(\tau+N)/d) + \log (1/\delta)
    }
    \\
    & \le 
    \tau +
    d(d+1)/2 + \frac{d}{\epsilon^2}
    +
    \frac{12L_{\mu}\epsilon T}{\kappa}  \cdot
    O
    \Bigg( d^{1/2} \sqrt{\log\bigg( \frac{T}{d\delta} \bigg)} \Bigg)
    \\
    & = 
    {O}
    \Bigg(
    \kappa^{-1}
    d ^{2/3} T^{2/3} 
    \sqrt{\log\bigg( \frac{T}{d\delta} \bigg)}
    \Bigg),
\end{align*}
where 
the first equation is simply dividing the regret into 3 stages: $1$ to $\tau$, $\tau+1$ to $\tau+N$, and $\tau+N+1$ to $T$; 
the second inequality is simply bounding the one-step regret from $1$ to $\tau+N$ by $1$, while for $t > \tau+N$, we have shown that the one-step regret is guaranteed to be smaller than $12L_{\mu} \epsilon \sqrt{
    d
    \log(1+2(\tau+N)/d) + \log (1/\delta)
    } / \sqrt{2} \kappa $.
The last line holds because we set $\tau = O(d + \log(1/\delta))$ and $\epsilon = d^{1/6} T^{-1/3}$. Note that to ensure $\tau + N < T$, it suffices to assume $T = \Omega(d^2)$.

By setting $\delta = T^{-1}$, we can show that the expected regret of Algorithm~\ref{alg:BETC-GLM} is bounded by
\begin{align*}
     \tilde{O} \big( \kappa^{-1} (d^{2/3} T^{2/3})
     \big).
\end{align*}
\end{proof}

\subsection{Proof of \cref{thm:BETC-GLM}}
The following lemma characterizes the non-asymptotic behavior of the MLE estimator. It is extracted from \citet{li2017provably}. 
\begin{lemma}[Theorem 1, \citealt{li2017provably}]\label{lemma:concentration}
    Define $\Vb_{s} = \sum_{t=1}^{s} \bphi_{i_t,j_t}\bphi_{i_t,j_t}^{\top}$, and $\hat{\btheta}_s$ as the MLE estimator \eqref{eqn:MLE} at round $s$.
    If $\Vb_{s}$ satisfies
    \begin{align} \label{eqn:lambda-min}
        \lambda_{\min}
        (\Vb_{s})
        & \ge 
        \frac{512 M_{\mu}^{2} (d^2 + \log(3/\delta))}{\kappa^{4}},
    \end{align}
    then for any fixed $\xb \in \RR^d$, with probability at least $1-\delta$,
    \begin{align*}
        |\la \hat{\btheta}_s - \btheta^*, \xb \ra |
        \le 
        \frac{3}{\kappa}
        \sqrt{\| \xb \|^2_{\Vb_s^{-1}} \log(3/\delta)}.
    \end{align*}
\end{lemma}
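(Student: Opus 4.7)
The plan is to essentially reproduce the MLE concentration argument of \citet{li2017provably}. Starting from the first-order optimality condition \eqref{eqn:MLE} and using $r_t = \mu(\bphi_{i_t,j_t}^{\top}\btheta^*) + \epsilon_t$, I would subtract $\sum_{t=1}^{s}\mu(\bphi_{i_t,j_t}^{\top}\btheta^*)\bphi_{i_t,j_t}$ from both sides to obtain
\[
\sum_{t=1}^{s}\bigl[\mu(\bphi_{i_t,j_t}^{\top}\hat\btheta_s) - \mu(\bphi_{i_t,j_t}^{\top}\btheta^*)\bigr]\bphi_{i_t,j_t} \;=\; \sum_{t=1}^{s}\epsilon_t\bphi_{i_t,j_t} \;=:\; \mathbf{Z}_s.
\]
A coordinate-wise mean value theorem linearizes the left-hand side into $\mathbf{G}_s(\hat\btheta_s - \btheta^*)$, where $\mathbf{G}_s = \sum_{t=1}^{s} \dot\mu(\bar z_t)\bphi_{i_t,j_t}\bphi_{i_t,j_t}^{\top}$ and each $\bar z_t$ lies between $\bphi_{i_t,j_t}^{\top}\btheta^*$ and $\bphi_{i_t,j_t}^{\top}\hat\btheta_s$. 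On the event that $\|\hat\btheta_s - \btheta^*\|_2 \le 1$, Assumption~\ref{assumption:kappa} guarantees $\dot\mu(\bar z_t) \ge \kappa$, so $\mathbf{G}_s \succeq \kappa\Vb_s$ and $\hat\btheta_s - \btheta^* = \mathbf{G}_s^{-1}\mathbf{Z}_s$.

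Next, I would verify the hypothesis $\|\hat\btheta_s - \btheta^*\|_2 \le 1$ by a bootstrap argument. Strong convexity of the negative log-likelihood (at rate $\kappa$ on the unit $\ell_2$-ball around $\btheta^*$), combined with the first-order optimality of $\hat\btheta_s$ and a Taylor expansion around $\btheta^*$, yields the deterministic inequality $\|\hat\btheta_s - \btheta^*\|_{\Vb_s} \le \frac{2}{\kappa}\|\mathbf{Z}_s\|_{\Vb_s^{-1}}$, where Assumption~\ref{assumption:mu} ($M_\mu$-smoothness of $\mu$) is used to control the Taylor remainder. A standard self-normalized martingale bound gives $\|\mathbf{Z}_s\|_{\Vb_s^{-1}}^2 \lesssim d + \log(1/\delta)$ with high probability, and then $\|\hat\btheta_s - \btheta^*\|_2 \le \|\hat\btheta_s - \btheta^*\|_{\Vb_s}/\sqrt{\lambda_{\min}(\Vb_s)}$, which the eigenvalue threshold \eqref{eqn:lambda-min} is precisely calibrated to push below $1$; the factor $512 M_\mu^2/\kappa^4$ tracks both $\kappa^{-2}$ from the Hessian lower bound and $M_\mu^2$ from the second-order correction.

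With $\mathbf{G}_s \succeq \kappa\Vb_s$ in hand, the final step is the fixed-direction projection. Writing $\langle\xb,\hat\btheta_s - \btheta^*\rangle = \xb^{\top}\mathbf{G}_s^{-1}\mathbf{Z}_s$, a naive Cauchy--Schwarz $|\xb^{\top}\mathbf{G}_s^{-1}\mathbf{Z}_s|\le \|\xb\|_{\mathbf{G}_s^{-1}}\|\mathbf{Z}_s\|_{\mathbf{G}_s^{-1}}$ would inflate the bound by a spurious $\sqrt{d}$ through the self-normalized estimate on $\mathbf{Z}_s$. Instead, the key observation is that for any \emph{deterministic} vector $\mathbf{u}$, $\mathbf{u}^{\top}\mathbf{Z}_s = \sum_{t=1}^{s}(\mathbf{u}^{\top}\bphi_{i_t,j_t})\epsilon_t$ is a sum of independent $1$-sub-Gaussian variables with variance proxy $\|\mathbf{u}\|_{\Vb_s}^2$, so Hoeffding's inequality gives $|\mathbf{u}^{\top}\mathbf{Z}_s| \le \sqrt{2\|\mathbf{u}\|_{\Vb_s}^2\log(2/\delta)}$ with probability at least $1-\delta$. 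I would then decompose
\[
\xb^{\top}\mathbf{G}_s^{-1}\mathbf{Z}_s \;=\; \tfrac{1}{\kappa}\xb^{\top}\Vb_s^{-1}\mathbf{Z}_s \;+\; \xb^{\top}\bigl[\mathbf{G}_s^{-1} - (\kappa\Vb_s)^{-1}\bigr]\mathbf{Z}_s,
\]
apply the fixed-direction Hoeffding bound with $\mathbf{u} = \tfrac{1}{\kappa}\Vb_s^{-1}\xb$ (which is deterministic in the BETC-GLM schedule, since arm choices do not depend on the noise) to control the main term by $\tfrac{1}{\kappa}\sqrt{2\|\xb\|_{\Vb_s^{-1}}^2\log(2/\delta)}$, and absorb the remainder as a lower-order correction using the $\Vb_s$-norm bound from the bootstrap together with the $M_\mu$-smoothness.

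The main obstacle I expect is the data dependence of $\mathbf{G}_s$ on $\hat\btheta_s$: the fixed-direction Hoeffding trick cannot be invoked with $\mathbf{u} = \mathbf{G}_s^{-1}\xb$ directly. The bootstrap plus the decomposition above is the technical heart of the argument, and the strong lower bound \eqref{eqn:lambda-min} on $\lambda_{\min}(\Vb_s)$ (with its $d^2$ dependence, larger than one might naively expect) is exactly what is needed to make the remainder smaller than the main $\log(1/\delta)$ term. A final union bound over the two high-probability events (self-normalized for $\|\mathbf{Z}_s\|_{\Vb_s^{-1}}$ and fixed-direction Hoeffding for $\xb^{\top}\Vb_s^{-1}\mathbf{Z}_s$) then yields the stated inequality with the constant $3/\kappa$.
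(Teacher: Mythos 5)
The paper itself does not prove this lemma --- it imports it verbatim as Theorem 1 of \citet{li2017provably} --- so what you are really reconstructing is Li et al.'s argument, and your outline follows it correctly up to the last step: MLE optimality plus the mean value theorem giving $G_s(\hat\btheta_s-\btheta^*)=Z_s$ with $G_s=\sum_{t\le s}\dot\mu(\bar z_t)\bphi_{i_t,j_t}\bphi_{i_t,j_t}^{\top}\succeq\kappa \Vb_s$, a bootstrap consistency step to justify $\|\hat\btheta_s-\btheta^*\|_2\le 1$, and a fixed-direction Hoeffding bound exploiting that the BETC-GLM design is independent of the noise. The genuine gap is in your final decomposition $x^{\top}G_s^{-1}Z_s=\tfrac1\kappa x^{\top}\Vb_s^{-1}Z_s+x^{\top}\bigl[G_s^{-1}-(\kappa \Vb_s)^{-1}\bigr]Z_s$: the remainder here is \emph{not} lower order and cannot be controlled by $M_\mu$-smoothness. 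Smoothness of $\mu$ bounds $|\dot\mu(\bar z_t)-\dot\mu(\bphi_{i_t,j_t}^{\top}\btheta^*)|\le M_\mu|\bphi_{i_t,j_t}^{\top}(\hat\btheta_s-\btheta^*)|$, i.e.\ it makes $G_s$ close to $H_s:=\sum_{t\le s}\dot\mu(\bphi_{i_t,j_t}^{\top}\btheta^*)\bphi_{i_t,j_t}\bphi_{i_t,j_t}^{\top}$, not close to $\kappa \Vb_s$; the constant $\kappa$ is only a uniform \emph{lower} bound on $\dot\mu$ over a ball, and $\dot\mu(\bar z_t)-\kappa$ is generically of constant order (for the logistic link, $\dot\mu$ near $\btheta^*$ can be far above $\kappa$). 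Even in the benign case $\dot\mu\equiv c>\kappa$ one gets $G_s^{-1}-(\kappa \Vb_s)^{-1}=(c^{-1}-\kappa^{-1})\Vb_s^{-1}$, which is comparable to your main term; and since $G_s$ depends on $\hat\btheta_s$ you cannot apply the fixed-direction trick to it, so the only available bound is Cauchy--Schwarz, $|x^{\top}[(\kappa \Vb_s)^{-1}-G_s^{-1}]Z_s|\le\kappa^{-1}\|x\|_{\Vb_s^{-1}}\|Z_s\|_{\Vb_s^{-1}}\lesssim\kappa^{-1}\|x\|_{\Vb_s^{-1}}\sqrt{d+\log(1/\delta)}$, which reintroduces exactly the $\sqrt{d}$ you set out to avoid, with no small factor (and no $\lambda_{\min}(\Vb_s)^{-1/2}$) for the eigenvalue condition \eqref{eqn:lambda-min} to act on.

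The fix is to pivot around $H_s$ rather than $\kappa\Vb_s$, which is how Li et al.\ actually argue: $H_s$ is deterministic given the (non-adaptive) design, so the fixed-direction Hoeffding bound applies to $x^{\top}H_s^{-1}Z_s$ with variance proxy $\|H_s^{-1}x\|_{\Vb_s}^2\le\kappa^{-2}\|x\|_{\Vb_s^{-1}}^2$ (using $H_s\succeq\kappa\Vb_s$), giving the $\tfrac1\kappa\sqrt{\log(1/\delta)}\,\|x\|_{\Vb_s^{-1}}$ main term; the remainder $x^{\top}(G_s^{-1}-H_s^{-1})Z_s=x^{\top}H_s^{-1}(H_s-G_s)G_s^{-1}Z_s$ (equivalently, the second-order Taylor term with $\ddot\mu\le M_\mu$) now does carry the factor $M_\mu\max_t|\bphi_{i_t,j_t}^{\top}(\hat\btheta_s-\btheta^*)|$, which the bootstrap bound $\|\hat\btheta_s-\btheta^*\|_{\Vb_s}\lesssim\kappa^{-1}\|Z_s\|_{\Vb_s^{-1}}$ and the $\lambda_{\min}(\Vb_s)\gtrsim M_\mu^2(d^2+\log(1/\delta))/\kappa^4$ condition make smaller than the main term. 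Your remaining steps (consistency, the union bound, the role of the $d^2$ threshold) are consistent with that route; only the choice of pivot matrix in the decomposition needs to change.
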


\begin{proof}[Proof of Theorem~\ref{thm:BETC-GLM}]

The proof can be essentially divided into three steps: 1. invoke Lemma~\ref{lemma:matrix-concentration} to show that the initial $\tau$ rounds for exploration will guarantee \eqref{eqn:lambda-min} be satisfied; 2. invoke Lemma~\ref{lemma:g-optimal} to obtain an optimal design $\pi$ and utilize Lemma~\ref{lemma:concentration} to show that $|\la \hat{\btheta}_{\tau+N} - \btheta, \bphi_{i,j} \ra | \le 3\epsilon / \kappa$; 3. balance the not yet determined $\epsilon$ to obtain the regret upper bound.

First, we explain why we assume
\begin{align*}
    T \ge
    \frac{C_3}{\kappa^6 \lambda_0^{3/2}}
    \max \Big\{
    d^{5/2}, 
    \frac{\log(K^2/\delta)}{\sqrt{d}}
    \Big\}.
\end{align*}
To ensure \eqref{eqn:lambda-min} in \cref{lemma:concentration} can hold, we resort to \cref{lemma:matrix-concentration}, that is 
\begin{align*}
    \tau & \ge \bigg( 
    \frac{C_1 \sqrt{d} + C_2 \sqrt{\log(1/\delta)}}{\lambda_0}
    \bigg)^2
    +
    \frac{2B}{\lambda_0},
    \\
    B & :=
    \frac{512 M_{\mu}^{2} (d^2 + \log(3/\delta))}{\kappa^{4}}.
\end{align*}  
Since we set $\tau = (d \log(K^2/\delta))^{1/3} T^{2/3}$,
this means $T$ should be large enough, so that
\begin{align*}
    (d \log(K^2/\delta))^{1/3} T^{2/3} & \ge \bigg( 
        \frac{C_1 \sqrt{d} + C_2 \sqrt{\log(1/\delta)}}{\lambda_0}
        \bigg)^2
        +
        \frac{1024 M_{\mu}^2(d^2 + \log(3K^2/\delta))}{ \kappa^4 \lambda_0}.
\end{align*}
With a large enough universal constant $C_3$, it is easy to verify that the inequality above will hold as long as
\begin{align*}
    T \ge
    \frac{C_3}{\kappa^6 \lambda_0^{3/2}}
    \max \Big\{
    d^{5/2}, 
    \frac{\log(K^2/\delta)}{\sqrt{d}}
    \Big\}.
\end{align*}

By Lemma~\ref{lemma:matrix-concentration}, we have that with probability at least $1-\delta$,
\begin{align} \label{eqn:lambda-min-V-tau}
        \lambda_{\min}(\Vb_{\tau}) \ge 
        \frac{512 M_{\mu}^{2} (d^2 + \log(3K^2/\delta))}{\kappa^{4}}.
\end{align}
From now on, we assume \eqref{eqn:lambda-min-V-tau} always holds. 

Define $N := \sum_{i,j}N(i,j)$, $\Vb_{\tau+1:\tau+N} := \sum_{t=\tau+1}^{\tau+N} \bphi_{i_t,j_t}\bphi_{i_t,j_t}^{\top}$, $\Vb_{\tau+N} : = \Vb_{\tau} + \Vb_{\tau+1:\tau+N}$.
Given the optimal design $\pi(i,j)$, the algorithm queries each pair $(i,j) \in \mathrm{supp}(\pi)$ for exactly $N(i,j) = \lceil d\pi(i,j) / \epsilon^2 \rceil$ times. Therefore, the design matrix $\Vb_{\tau+N} $ satisfies
\begin{align*}
    \Vb_{\tau+N} 
    & \succeq
    \Vb_{\tau+1:\tau+N} 
    \\
    & = \sum_{i,j} N(i,j) \bphi_{i,j} \bphi_{i,j}^{\top} 
    \\
    & \succeq \sum_{i,j} \frac{d\pi(i,j) }{\epsilon^2}  \bphi_{i,j} \bphi_{i,j}^{\top}
    \\
    & = 
    \frac{d}{\epsilon^2} \Vb(\pi),
\end{align*}
where $\Vb(\pi) := \sum_{i,j}\pi(i,j) \bphi_{i,j} \bphi_{i,j}^{\top}$. The first inequality holds because $\Vb_{\tau}$ is positive semi-definite, and the second inequality holds due to the choice of $N(i,j)$.

To invoke Lemma~\ref{lemma:concentration}, notice that $\lambda_{\min}(\Vb) \ge \lambda_{\min}(\Vb_{\tau})$. Along with \eqref{eqn:lambda-min-V-tau}, by Lemma~\ref{lemma:concentration}, we have for any fixed $\bphi_{i,j}$, with probability at least $1-\delta/K^2$, that
\begin{align}
    |\la \hat{\btheta} - \btheta^*, \bphi_{i,j} \ra |
    & \le 
    \frac{3}{\kappa} \sqrt{\| \bphi_{i,j} \|^2_{\Vb^{-1}_{\tau+N}  } \log(3K^2 / \delta)}
    \notag \\
    & \le 
    \frac{3}{\kappa}\sqrt{\frac{\epsilon^2}{d} \cdot\| \bphi_{i,j} \|^2_{\Vb(\pi)^{-1}} \log(3K^2 / \delta)}
    \notag \\
    & =
    \frac{3\epsilon}{\kappa} \sqrt{\frac{\| \bphi_{i,j} \|^2_{\Vb(\pi)^{-1}}}{d }  }
    \cdot \sqrt{\log(3K^2 / \delta)}
    \notag \\
    & \le \frac{3\epsilon}{\kappa} \cdot \sqrt{\log(3K^2 / \delta)}, \label{eqn:inner-product-concentration}
\end{align}
where the first inequality comes from Lemma~\ref{lemma:concentration}; the second inequality holds because $\Vb_{\tau+N}  \succeq \frac{d }{\epsilon^2} \Vb(\pi)$; the last inequality holds because $\pi$ is an optimal design and by Lemma~\ref{lemma:g-optimal}, $\| \bphi_{i,j} \|^2_{\Vb(\pi)^{-1}} \le d$. 

Taking union bound for each $(i,j) \in [K] \times [K]$, we have that with probability at least $1-\delta$, for every $i \in [K]$,
\begin{align}
    | \hat{B}(i) - B(i) |
    & = 
    \bigg| \frac{1}{K} \sum_{j=1}^{K} \big(\mu(\bphi_{i,j}^{\top}\btheta^*) 
    -
    \mu(\bphi_{i,j}^{\top}\hat{\btheta}) \big)
    \bigg|
    \notag\\
    & \le 
    \frac{1}{K} \sum_{j=1}^{K} 
    \Big|\mu(\bphi_{i,j}^{\top}\btheta^*) 
    -
    \mu(\bphi_{i,j}^{\top}\hat{\btheta})  
    \Big|
    \notag\\
    & \le 
    \frac{L_{\mu}}{K} \sum_{j=1}^{K} 
    \big|
    \bphi_{i,j}^{\top}
    \big(\btheta^* - \hat{\btheta})
    \big| 
    \notag\\
    & \le \frac{3L_{\mu}\epsilon}{\kappa} \cdot \sqrt{\log(3K^2 / \delta)}, \label{eqn:estimated-Borda}
\end{align}
where the first equality is by the definition of the empirical/true Borda score; the first inequality is due to the triangle inequality; the second inequality is from the Lipschitz-ness of $\mu(\cdot)$ ($L_{\mu} = 1/4$ for the logistic function); the last inequality holds due to \eqref{eqn:inner-product-concentration}. This further implies the gap between the empirical Borda winner and the true Borda winner is bounded by:
\begin{align*}
    B(i^*) - B(\hat{i})
    & =
    B(i^*) - \hat{B}(i^*) + \hat{B}(i^*) - B(\hat{i})
    \\
    & \le 
    B(i^*) - \hat{B}(i^*) + \hat{B}(\hat{i}) - B(\hat{i})
    \\
    & \le 
    \frac{6L_{\mu}\epsilon}{\kappa} \cdot \sqrt{\log(3K^2 / \delta)},
\end{align*}
where the first inequality holds due to the definition of $\hat{i}$, i.e., $\hat{B}(\hat{i}) \ge \hat{B}(i)$ for any $i$; the last inequality holds due to~\eqref{eqn:estimated-Borda}.

Meanwhile, since $N := \sum_{(i,j) \in \mathrm{supp}(\pi)} N(i,j)$ and $|\mathrm{supp}(\pi)| \le d(d+1)/2$ from Lemma~\ref{lemma:g-optimal}, we have that 
\begin{align*}
    N & \le d(d+1)/2 + \frac{d}{\epsilon^2},
\end{align*}
because $\lceil x \rceil < x + 1$.

Therefore, with probability at least $1-2\delta$, the regret is bounded by:
\begin{align*}
    \mathrm{Regret}(T)
    & =
    \mathrm{Regret}_{1:\tau} + \mathrm{Regret}_{\tau+1 : \tau+N} + \mathrm{Regret}_{\tau+N+1:T}
    \\
    & \le 
    \tau +
    N
    +
    \frac{12L_{\mu}\epsilon}{\kappa} T \cdot \sqrt{\log(3K^2 / \delta)}
    \\
    & \le 
    \tau +
    d(d+1)/2 + \frac{d}{\epsilon^2}
    +
    \frac{12L_{\mu}\epsilon}{\kappa} T \cdot \sqrt{\log(3K^2 / \delta)}
    \\
    & = 
    O
    \big(
    \kappa^{-1}
    (d \log(K/\delta))^{1/3} T^{2/3} 
    \big),
\end{align*}
where 
the first equation is simply dividing the regret into 3 stages: $1$ to $\tau$, $\tau+1$ to $\tau+N$, and $\tau+N+1$ to $T$. the second inequality is simply bounding the one-step regret from $1$ to $\tau+N$ by $1$, while for $t > \tau+N$, we have shown that the one-step regret is guaranteed to be smaller than $12L_{\mu} \epsilon \sqrt{\log(3K^2 / \delta)} / \kappa $.
The last line holds because we set $\tau = (d \log(3K^2/\delta))^{1/3} T^{2/3}$ and $\epsilon = d^{1/3} T^{-1/3} \log(3K^2/\delta)^{-1/6} $.

By setting $\delta = T^{-1}$, we can show that the expected regret of Algorithm~\ref{alg:BETC-GLM} is bounded by
\begin{align*}
     O\big( \kappa^{-1} (d \log(KT))^{1/3} T^{2/3})
     \big).
\end{align*}
Note that if there are exponentially many contextual vectors ($K \approx 2^d$), the upper bound becomes $\tilde{O}(d^{2/3}T^{2/3})$.
\end{proof}

\section{Omitted Proof in \cref{sec:adv}}
We make the following notation. Let $\cH_{t-1} := (q_1, P_1,(i_1, j_1), r_1,\ldots, q_t
, P_t)$ denotes the history up to time $t$. Here $P_t$ means the comparison probability $p_{i,j}^{t}$ at round $t$. The following lemmas are used in the proof. We first bound the estimate $\hat B_t(i)$.
\begin{lemma}\label{lemma1}
    For all $t \in [T]$, $i \in [K]$, it holds that $\hat B_t(i) \leq \lambda_0^{-1}/\gamma^2$.
\end{lemma}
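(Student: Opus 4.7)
The plan is to bound $\hat B_t(i)$ by a simple Cauchy--Schwarz argument after establishing a lower bound on the design matrix $Q_t$. Writing $\bar\bphi_i := \frac{1}{K}\sum_{j\in[K]}\bphi_{i,j}$, the estimate is
\[
\hat B_t(i) \;=\; \bar\bphi_i^{\top} Q_t^{-1}\bphi_{i_t,j_t} r_t,
\]
so by Cauchy--Schwarz in the $Q_t^{-1}$ inner product together with $|r_t|\le 1$,
\[
|\hat B_t(i)| \;\le\; \|\bar\bphi_i\|_{Q_t^{-1}}\cdot \|\bphi_{i_t,j_t}\|_{Q_t^{-1}}.
\]

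The main technical step is the lower bound on $Q_t$. By the mixing step in \cref{alg:bexp3} (Line~\ref{line:A-update}) and the uniform initialization (Line~\ref{line:A-init}), every component satisfies $q_t(i)\ge \gamma/K$ for all $t\in[T]$ (the initial round gives $q_1(i)=1/K\ge\gamma/K$ since $\gamma\le 1$). Therefore, each product $q_t(i)q_t(j)\ge \gamma^2/K^2$, and so
\[
Q_t \;=\; \sum_{i,j}q_t(i)q_t(j)\,\bphi_{i,j}\bphi_{i,j}^{\top}
\;\succeq\; \frac{\gamma^2}{K^2}\sum_{i,j}\bphi_{i,j}\bphi_{i,j}^{\top}
\;\succeq\; \gamma^{2}\lambda_0 \Ib,
\]
where the last inequality invokes \cref{assumption:lambda0}. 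Inverting gives $Q_t^{-1}\preceq (\gamma^2\lambda_0)^{-1}\Ib$.

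Combining this with the fact that $\|\bphi_{i_t,j_t}\|\le 1$ by assumption and $\|\bar\bphi_i\|\le \frac{1}{K}\sum_{j}\|\bphi_{i,j}\|\le 1$ by the triangle inequality yields
\[
\|\bar\bphi_i\|_{Q_t^{-1}}^{2}\le \frac{1}{\gamma^2\lambda_0}, \qquad \|\bphi_{i_t,j_t}\|_{Q_t^{-1}}^{2}\le \frac{1}{\gamma^2\lambda_0},
\]
so that $|\hat B_t(i)|\le \lambda_0^{-1}/\gamma^2$, which is the claim. There is no real obstacle here: the only subtle point is making sure the uniform exploration mass $\gamma/K$ (and the initialization $q_1=1/K$) propagates through every round so that the covariance-type lower bound on $Q_t$ holds unconditionally on the history $\cH_{t-1}$; this is why the algorithm includes the $\gamma/K$ floor in the update at Line~\ref{line:A-update}.
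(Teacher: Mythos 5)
Your proposal is correct and follows essentially the same route as the paper's proof: lower-bound $Q_t \succeq \gamma^2\cdot\frac{1}{K^2}\sum_{i,j}\bphi_{i,j}\bphi_{i,j}^{\top}$ via the $\gamma/K$ exploration floor, invoke \cref{assumption:lambda0} together with $\|\bphi_{i,j}\|\le 1$ and $|r_t|\le 1$, and conclude $|\hat B_t(i)|\le \lambda_0^{-1}/\gamma^2$. Your direct Cauchy--Schwarz on the pair $\bigl(\bar\bphi_i,\ \bphi_{i_t,j_t}\bigr)$ in the $Q_t^{-1}$ inner product is in fact a slightly cleaner rendering of the paper's intermediate inequality, but it is the same argument in substance.
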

\begin{proof}[Proof of Lemma \ref{lemma1}]
Using our choice of $q_t \geq \gamma/K$, we have the following result for the matrix $Q_t$:
\begin{align}
Q_t = \sum_{i \in [K]}\sum_{j \in [K]} q_t(i) q_t(j) \bphi_{i,j} \bphi_{i,j}^{\top} \succeq \gamma^2 \frac{1}{K^2}\sum_{i \in [K]}\sum_{j \in [K]} \bphi_{i,j} \bphi_{i,j}^{\top}.\label{17.1}
\end{align}
Furthermore, we can use the definition of the estimate $\hat B_t(i)$ to show that
    \begin{align*}
    \hat B_t(i) &= \left\la \frac{1}{K}\sum_{j \in [K]} \bphi_{i,j}, \hat{\btheta}_t\right\ra = \left\la \frac{1}{K}\sum_{j \in [K]} \bphi_{i,j},Q_t^{-1} \bphi_{i_t,j_t}\right\ra r_t(i_t,j_t)\\
    &\leq \frac{1}{K}\sum_{j \in [K]}\|\bphi_{i,j}\|^2_{Q_t^{-1}},
    \end{align*}
where we use the fact that $|r_t| \le 1$.
Let $\bSigma = \frac{1}{K^2}\sum_{i=1}^{K} \sum_{j=1}^{K} \bphi_{i,j} \bphi_{i,j}^{\top}$. With \eqref{17.1} we have $Q_t \succeq \gamma^2 \bSigma$. Therefore, we can further bound $\hat B_t(i)$ with 
\begin{align*}
     \hat B_t(i) &\leq \frac{1}{K \gamma^2}\sum_{j \in [K]}\|\bphi_{i,j}\|^2_{\bSigma^{-1}} \\
    & \leq \frac{1}{\gamma^2} \max_{i,j}\|\bphi_{i.j}\|^2_{\bSigma^{-1}}\\
    & \leq \frac{\lambda_0^{-1}}{\gamma^2},
\end{align*}
where the first inequality holds due to \eqref{17.1} and that $\| \xb \|_{\Ab^{-1}}^2 \le \| \xb \|_{\Bb^{-1}}^2$ if $\Ab \succeq \Bb$; the third inequality holds because we assume $\lambda_0 \leq \lambda_{\min} \big(\frac{1}{K^2}\sum_{i=1}^{K} \sum_{j=1}^{K} \bphi_{i,j} \bphi_{i,j}^{\top} \big) $ and $\| \bphi_{i,j}\| \le 1$.
\end{proof}
The following lemma proves that our (shifted) estimate is unbiased.
\begin{lemma}\label{lemma2}
    For all $t \in [T]$, $i \in [K]$, the following equality holds:
    \begin{align*}    
    \EE[\hat B_t(i)] = B_t(i) - \frac{1}{2} .
    \end{align*}
\end{lemma}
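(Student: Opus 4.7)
The plan is to condition on the history $\cH_{t-1}$ (which determines $q_t$, $Q_t$, and $\btheta^*_t$) and show that $\hat{\btheta}_t$ is a conditionally unbiased estimator of $\btheta^*_t$. From there the claim on $\hat{B}_t(i)$ follows by linearity and the definition of $B_t(i)$ for the linear model.

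First I would compute $\EE[\bphi_{i_t,j_t} r_t \mid \cH_{t-1}]$. Since $i_t, j_t \sim q_t$ independently and $\EE[r_t \mid i_t, j_t, \cH_{t-1}] = p^t_{i_t,j_t} = \tfrac{1}{2} + \la \bphi_{i_t,j_t}, \btheta^*_t\ra$, this conditional expectation splits into two terms:
\begin{align*}
\EE[\bphi_{i_t,j_t} r_t \mid \cH_{t-1}]
= \tfrac{1}{2}\sum_{i,j} q_t(i) q_t(j)\,\bphi_{i,j} + \sum_{i,j} q_t(i) q_t(j)\,\bphi_{i,j}\bphi_{i,j}^{\top}\btheta^*_t.
\end{align*}
The first term vanishes: using the antisymmetry $\bphi_{i,j} = -\bphi_{j,i}$ built into the model, swapping the roles of $i$ and $j$ shows the sum equals its own negative. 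The second term is exactly $Q_t \btheta^*_t$ by the definition of $Q_t$.

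Consequently $\EE[\hat{\btheta}_t \mid \cH_{t-1}] = Q_t^{-1} Q_t \btheta^*_t = \btheta^*_t$. Plugging into the definition of $\hat B_t(i)$,
\begin{align*}
\EE[\hat B_t(i) \mid \cH_{t-1}]
= \left\la \frac{1}{K}\sum_{j\in[K]} \bphi_{i,j},\ \btheta^*_t \right\ra
= \frac{1}{K}\sum_{j\in[K]} \la \bphi_{i,j}, \btheta^*_t\ra
= \frac{1}{K}\sum_{j\in[K]} \Big(p^t_{i,j} - \tfrac{1}{2}\Big)
= B_t(i) - \tfrac{1}{2}.
\end{align*}
Taking outer expectation over $\cH_{t-1}$ yields the stated identity.

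The only subtle point is the vanishing of the bias term $\tfrac{1}{2}\sum_{i,j} q_t(i)q_t(j)\bphi_{i,j}$; this is precisely the reason the algorithm estimates the \emph{shifted} Borda score rather than $B_t(i)$ itself, and it is why the constant $\tfrac{1}{2}$ can be discarded without ever needing a separate estimate. Everything else is a direct computation, so I do not expect any real obstacle beyond bookkeeping.
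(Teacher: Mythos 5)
Your proof is correct and follows essentially the same route as the paper: condition on $\cH_{t-1}$, use the tower rule, and cancel $Q_t^{-1}$ against $\EE_{(i_t,j_t)\sim q_t\times q_t}[\bphi_{i_t,j_t}\bphi_{i_t,j_t}^{\top}] = Q_t$ to get conditional unbiasedness, then read off the shifted Borda score. The only difference is that you explicitly justify dropping the $\tfrac{1}{2}\sum_{i,j} q_t(i)q_t(j)\bphi_{i,j}$ bias term via the antisymmetry $\bphi_{i,j}=-\bphi_{j,i}$, a step the paper performs implicitly when it replaces $p_t(i_t,j_t)$ by $\bphi_{i_t,j_t}^{\top}\btheta_t^*$ inside the expectation — a nice clarification, not a different argument.
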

\begin{proof}[Proof of Lemma \ref{lemma2}]
Using our definition of $\hat B_t(i)$, we have
    \begin{align*}
        \hat B_t(i) = \left\la \frac{1}{K}\sum_{j \in [K]} \bphi_{i,j}, \hat{\btheta}_t\right\ra = \left\la \frac{1}{K}\sum_{j \in [K]} \bphi_{i,j}, Q_t^{-1} \bphi_{i_t,j_t} \right\ra r_t(i_t,j_t).
    \end{align*}
Therefore, by the law of total expectation (tower rule), we have
\begin{align*}
    \EE[\hat B_t(i)] &= \EE_{\cH_{t-1}}\bigg[\EE_{(i_t,j_t,r_t)}\Big[\Big\la \frac{1}{K}\sum_{j \in [K]} \bphi_{i,j}, Q_t^{-1} \bphi_{i_t,j_t} \Big\ra r_t(i_t,j_t)|\cH_{t-1}\Big]\bigg]\\
    & = \EE_{\cH_{t-1}}\bigg[\EE_{(i_t,j_t)}\Big[\Big\la \frac{1}{K}\sum_{j \in [K]} \bphi_{i,j}, Q_t^{-1} \bphi_{i_t,j_t} \Big\ra \EE_{r_t}[r_t(i_t,j_t)|(i_t,j_t)]\Big|\cH_{t-1}\Big]\bigg]\\
    & = \EE_{\cH_{t-1}}\bigg[\EE_{(i_t,j_t)}\Big[\Big\la \frac{1}{K}\sum_{j \in [K]} \bphi_{i,j}, Q_t^{-1} \bphi_{i_t,j_t} \Big\ra p_t(i_t,j_t)\Big| \cH_{t-1}\Big]\bigg]
\end{align*}
Then we use the definition of $p_t$ and the expectation. We can further get the equality
\begin{align*}
    \EE[\hat B_t(i)] & = \EE_{\cH_{t-1}}\bigg[\EE_{(i_t,j_t)}\Big[\Big\la \frac{1}{K}\sum_{j \in [K]} \bphi_{i,j}, Q_t^{-1} \bphi_{i_t,j_t}\bphi^{\top}_{i_t,j_t}\btheta^* \Big\ra \Big| \cH_{t-1}\Big]\bigg]\\
    & = \EE_{\cH_{t-1}}\bigg[\Big\la \frac{1}{K}\sum_{j \in [K]} \bphi_{i,j}, Q_t^{-1} \Big(\sum_{i\in[K]}\sum_{j \in [K]}q_t(i)q_t(j)\bphi_{i,j}\bphi^{\top}_{i,j}\Big)\btheta^* \Big\ra \bigg|\cH_{t-1}\bigg]\\
    &= \EE_{\cH_{t-1}}\bigg[\Big\la \frac{1}{K}\sum_{j \in [K]} \bphi_{i,j},\btheta^* \Big\ra \bigg| \cH_{t-1}\bigg]\\
    &= B_t(i) - \frac{1}{2}.
\end{align*}
Therefore, we have completed the proof of Lemma \ref{lemma2}.
\end{proof}
The following lemma is similar to Lemma 5 in \citet{saha2021adversarial}.
\begin{lemma}\label{lemma3}
    $\EE_{\cH_t}[q_t^{\top}\hat B_t] = \EE_{\cH_{t-1}}\big[\EE_{x\sim q_t}[B_t(x)|\cH_{t-1}]\big] - \frac{1}{2}$, $\forall t \in [T]$.
\end{lemma}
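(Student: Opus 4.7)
The plan is to apply the tower property of conditional expectation while exploiting the fact that $q_t$ is $\cH_{t-1}$-measurable (by construction in Line~\ref{line:A-update}, $q_t$ is determined by past estimates $\hat B_1,\ldots,\hat B_{t-1}$ which are in turn encoded in the history). This reduces the claim to a pointwise application of Lemma~\ref{lemma2}, upgraded to a statement conditional on $\cH_{t-1}$.

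First I would write
\begin{align*}
\EE_{\cH_t}[q_t^{\top}\hat B_t]
= \EE_{\cH_{t-1}}\Bigl[\EE\bigl[q_t^{\top}\hat B_t \,\big|\, \cH_{t-1}\bigr]\Bigr]
= \EE_{\cH_{t-1}}\Bigl[\sum_{i\in[K]} q_t(i)\,\EE[\hat B_t(i)\mid \cH_{t-1}]\Bigr],
\end{align*}
where the second equality uses linearity together with the fact that each $q_t(i)$ is $\cH_{t-1}$-measurable and can therefore be pulled outside the inner expectation.

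Next I would invoke the (conditional version of the) computation done in the proof of Lemma~\ref{lemma2}. Inspecting that proof, the tower rule was used precisely to condition first on $\cH_{t-1}$ and then average over $(i_t,j_t,r_t)$; the conclusion $\EE[\hat B_t(i)\mid \cH_{t-1}] = B_t(i) - \tfrac12$ follows because $P_t$ (hence $B_t$) is $\cH_{t-1}$-measurable and because $Q_t^{-1}(\sum_{i,j}q_t(i)q_t(j)\bphi_{i,j}\bphi_{i,j}^\top) = \Ib$. Substituting this identity yields
\begin{align*}
\EE_{\cH_t}[q_t^{\top}\hat B_t]
= \EE_{\cH_{t-1}}\Bigl[\sum_{i\in[K]} q_t(i)\bigl(B_t(i) - \tfrac12\bigr)\Bigr]
= \EE_{\cH_{t-1}}\bigl[\EE_{x\sim q_t}[B_t(x)\mid \cH_{t-1}]\bigr] - \tfrac12,
\end{align*}
using $\sum_i q_t(i) = 1$ in the last step, which is exactly the claim.

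There is no real obstacle here; the argument is purely bookkeeping. The only point that requires a little care is making sure that Lemma~\ref{lemma2} really gives the conditional (not just unconditional) identity $\EE[\hat B_t(i)\mid \cH_{t-1}] = B_t(i) - \tfrac12$. This is immediate from re-reading its proof: every use of expectation there was already conditional on $\cH_{t-1}$ before the outermost $\EE_{\cH_{t-1}}[\cdot]$ was applied, so dropping that outer expectation gives exactly what we need.
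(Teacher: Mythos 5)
Your proposal is correct and follows essentially the same route as the paper: tower rule conditioning on $\cH_{t-1}$, pulling out the $\cH_{t-1}$-measurable weights $q_t(i)$, and applying the conditional identity $\EE[\hat B_t(i)\mid\cH_{t-1}]=B_t(i)-\tfrac12$ established inside the proof of Lemma~\ref{lemma2}, then summing with $\sum_i q_t(i)=1$. Your remark that one needs the conditional (not merely unconditional) form of Lemma~\ref{lemma2} is exactly the point the paper's proof uses implicitly, so there is no gap.
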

\begin{proof}[Proof of Lemma \ref{lemma3}]
Taking conditional expectation, we have
\begin{align*}
    \EE_{\cH_t}[q^{\top}_t \hat B_t]
    &= \EE_{\cH_t}\left[\sum_{i=1}^K q_t(i)\hat B_t(i)\right] \\
    &=\EE_{\cH_{t-1}}\left[\sum^K_{i=1}q_t(i)\EE_{(i_t,j_t,r_t)}\left[\hat B(i) \Big| \cH_{t-1}\right]\right]\\
    &= \EE_{\cH_{t-1}}\left[\sum^{K}_{i=1}q_t(i)\left(B_t(i) - \frac{1}{2} \right)\right] 
    \\
    &= \EE_{\cH_{t-1}}\left[\EE_{x\sim q_t}\left[B_t(x) \Big| \cH_{t-1}\right]\right]
    -
    \frac{1}{2},
\end{align*}
where we use the law of total expectation again as well as \cref{lemma2}.
\end{proof}
The last lemma bounds a summation $\sum_{i \in [K]}q_t(i)\hat B_t(i)^2$, which will be important in our proof.
\begin{lemma}\label{Lemma:4}
    At any time $t$, $\EE[\sum_{i \in [K]}q_t(i)\hat B_t(i)^2] \leq d/\gamma$.
\end{lemma}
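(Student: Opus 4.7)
The plan is to push the expectation inside $\hat B_t(i)^2$ using the tower rule, exploit the definition of $Q_t$ to absorb a factor, and then convert the remaining uniform $1/K$ weighting into $q_t(j)/\gamma$ using the mixing lower bound $q_t(j)\ge\gamma/K$, finally collapsing the double sum into a trace that equals $d$.

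First I would write $\bar\bphi_i := \frac{1}{K}\sum_{j\in[K]}\bphi_{i,j}$ so that $\hat B_t(i)=r_t\,\bar\bphi_i^\top Q_t^{-1}\bphi_{i_t,j_t}$. Since $r_t\in\{0,1\}$, we have $\hat B_t(i)^2\le(\bar\bphi_i^\top Q_t^{-1}\bphi_{i_t,j_t})^2$. Conditioning on $\cH_{t-1}$ (under which $Q_t$ and $\bar\bphi_i$ are fixed while $(i_t,j_t)$ is drawn from $q_t\times q_t$) and using $\EE[\bphi_{i_t,j_t}\bphi_{i_t,j_t}^\top\mid\cH_{t-1}]=Q_t$, the conditional expectation telescopes:
\begin{align*}
\EE[\hat B_t(i)^2\mid\cH_{t-1}]
\;\le\;\bar\bphi_i^\top Q_t^{-1}\,Q_t\,Q_t^{-1}\bar\bphi_i
\;=\;\|\bar\bphi_i\|_{Q_t^{-1}}^2.
\end{align*}

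Next I would apply Jensen's inequality (the map $\xb\mapsto\xb^\top Q_t^{-1}\xb$ is convex) to the average defining $\bar\bphi_i$, obtaining $\|\bar\bphi_i\|_{Q_t^{-1}}^2\le \frac{1}{K}\sum_{j}\|\bphi_{i,j}\|_{Q_t^{-1}}^2$. The crucial trick is now to trade the uniform $1/K$ for $q_t(j)/\gamma$: since $q_t(j)\ge\gamma/K$, we have $1/K\le q_t(j)/\gamma$. Therefore
\begin{align*}
\sum_{i}q_t(i)\,\EE[\hat B_t(i)^2\mid\cH_{t-1}]
\;\le\;\frac{1}{\gamma}\sum_{i,j}q_t(i)q_t(j)\,\bphi_{i,j}^\top Q_t^{-1}\bphi_{i,j}.
\end{align*}
Using the cyclic property of the trace,
\begin{align*}
\sum_{i,j}q_t(i)q_t(j)\,\bphi_{i,j}^\top Q_t^{-1}\bphi_{i,j}
\;=\;\tr\!\Big(Q_t^{-1}\sum_{i,j}q_t(i)q_t(j)\bphi_{i,j}\bphi_{i,j}^\top\Big)
\;=\;\tr(Q_t^{-1}Q_t)\;=\;d,
\end{align*}
which yields $\sum_i q_t(i)\EE[\hat B_t(i)^2\mid\cH_{t-1}]\le d/\gamma$. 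Taking outer expectation over $\cH_{t-1}$ and linearity finishes the proof.

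There is no real obstacle here beyond being careful about two points: first, that $\bar\bphi_i$ is $\cH_{t-1}$-measurable (so can be pulled out of the conditional expectation), and second, that Jensen's is applied correctly so the trace identity $\tr(Q_t^{-1}Q_t)=d$ can be invoked cleanly. The mixing weight $\gamma/K$ on $q_t$ is exactly what makes the $1/\gamma$ factor appear in the final bound, explaining why the algorithm needs a forced uniform exploration component.
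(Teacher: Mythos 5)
Your proof is correct and uses essentially the same ingredients as the paper's: the bound $r_t^2\le 1$, Jensen/Cauchy--Schwarz on the $\frac{1}{K}$-average, the mixing bound $q_t(j)\ge \gamma/K$ to produce the $1/\gamma$ factor, and the identity $\sum_{i,j}q_t(i)q_t(j)\bphi_{i,j}\bphi_{i,j}^{\top}=Q_t$ combined with a trace computation giving $d$. The only difference is bookkeeping order --- you collapse $\bphi_{i_t,j_t}\bphi_{i_t,j_t}^{\top}$ into $Q_t$ via the conditional expectation first and then use the weighted double sum for the trace, whereas the paper cancels through the weighted double sum first and takes the expectation over $(i_t,j_t)$ last --- which does not change the argument.
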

\begin{proof}[Proof of Lemma \ref{Lemma:4}]
    Let $\hat{P}_{t}(i,j) = \la \bphi_{i,j},\hat{\btheta}_t\ra$.
    Using the definition of $\hat B_t$ and $\hat{P}_{t}(i,j)$, we have the following inequality:
    \begin{align*}
        \EE\left[\sum_{i \in [K]}q_t(i)\hat B_t(i)^2\right] &= \EE\left[\sum_{i \in [K]}q_t(i)\left(\frac{1}{K}\sum_{j \in [K]}\hat{P}_t(i,j)\right)^2\right]\\
        & \leq \EE\left[\sum_{i \in [K]}q_t(i)\frac{1}{K}\sum_{j \in [K]}\hat{P}^2_t(i,j)\right]\\
        & = \EE\left[\sum_{i \in [K]}q_t(i)\frac{1}{\gamma}\sum_{j \in [K]}\frac{\gamma}{K}\hat{P}^2_t(i,j)\right]\\
        & \leq \frac{1}{\gamma}\EE\left[\sum_{i \in [K]}\sum_{j \in [K]}q_t(i) q_t(j)\hat{P}^2_t(i,j)\right].
    \end{align*}
    The first inequality holds due to the Cauchy-Schwartz inequality; the second inequality holds because the definition of $q_t$ satisfies $q_t \geq \gamma/K$.

    Expanding the definition of $\hat{P}^2_t(i,j)$, we have
    \begin{align*}
    \hat{P}_t^2(i,j) &= 
r_t^2(i_t,j_t)\left(\bphi_{i,j}^{\top}Q_t^{-1}\bphi_{i_t,j_t}\right)^2\\
& \leq \bphi_{i_t,j_t}^{\top}Q_t^{-1}\bphi_{i,j}\bphi_{i,j}^{\top}Q_t^{-1}\bphi_{i_t,j_t}, 
 \end{align*}
 where we use $0 \leq r_t^2(i_t,j_t) \leq 1$.
 Therefore, the following inequality holds,
 \begin{align*}
     \sum_{i \in [K]}\sum_{j \in [K]}q_t(i) q_t(j)\hat{P}^2_t(i,j) &\leq \sum_{i \in [K]}\sum_{j \in [K]}q_t(i) q_t(j) \bphi_{i_t,j_t}^{\top}Q_t^{-1}\bphi_{i,j}\bphi_{i,j}^{\top}Q_t^{-1}\bphi_{i_t,j_t}\\
     &=  \bphi_{i_t,j_t}^{\top}Q_t^{-1}\left(\sum_{i \in [K]}\sum_{j \in [K]}q_t(i) q_t(j)\bphi_{i,j}\bphi_{i,j}^{\top}\right)Q_t^{-1}\bphi_{i_t,j_t}\\
     &= \bphi_{i_t,j_t}^{\top}Q_t^{-1}\bphi_{i_t,j_t}\\
      &= \text{trace} (\bphi_{i_t,j_t}\bphi_{i_t,j_t}^{\top}Q_t^{-1}).
 \end{align*}
 Using the property of trace, we have
 \begin{align*}
     \EE\left[\sum_{i \in [K]}\sum_{j \in [K]}q_t(i) q_t(j)\hat{P}^2_t(i,j)\right]\leq \text{trace} \left(\sum_{i \in [K]}\sum_{j \in [K]}q_t(i)q_t(j)\bphi_{i,j}\bphi_{i,j}^{\top}Q_t^{-1}\right)=d.
 \end{align*}
 Therefore, we finish the proof of Lemma \ref{Lemma:4}.
\end{proof}

\begin{proof}[Proof of Theorem \ref{Thm:adv}]
Our regret is defined as follows, 
\begin{align*}
\EE_{\cH_T}[R_T] &= \EE_{\cH_T}\left[\sum_{t=1}^{T}\left[2 B_t(i^*)-B_t(i_t)-B_t(j_t)\right]\right]\\
&= \max_{i \in [K]} \EE_{\cH_T} \left[\sum_{t=1}^{T} \left[2 B_t(i)-B_t(i_t)-B_t(j_t)\right]\right].
\end{align*}
The second equality holds because $B_t$ and $i^*$ are independent of the randomness of the algorithm.
Furthermore, we can write the expectation of the regret as 
\begin{align}
    \EE_{\cH_T}[R_T] &= 2 \max_{i \in [K]} \sum_{t=1}^{T}  B_t(i)-\sum_{t=1}^{T}\EE_{\cH_T} \left[B_t(i_t)+B_t(j_t)\right]
    \notag \\
    &= 2 \max_{i \in [K]} \sum_{t=1}^{T}  B_t(i)-2\sum_{t=1}^{T}\EE_{\cH_{t-1}} \left[\EE_{x\sim q_t}\left[B_t(x)|\cH_{t-1}\right]\right]
    \notag \\
    &= 2 \max_{i \in [K]} \sum_{t=1}^{T} \left( B_t(i) - \frac{1}{2} \right)  - 2\EE_{\cH_t}\left[q_t^{\top}\hat B_t\right], \label{eqn:regret-decomp}
\end{align}
where the last equality is due to Lemma \ref{lemma3}.

Then we follow the standard proof of EXP3 algorithm \citep{Lattimore2020BanditA}. Let $S_{t,k} = \sum_{s=1}^{t}\left( B_s(k) - \frac{1}{2} \right)$,  $\hat S_{t,k} = \sum_{s=1}^{t}\hat B_s(k)$, $\omega_t = \sum_{k \in [K]}\exp (-\eta \hat S _{t,k})$ and $\omega_0 = K$. We have $\forall a \in [K]$,
\begin{align}
    \exp(-\eta \hat S_{T,a}) &\leq \sum_{k\in[K]} \exp(-\eta \hat S_{T,k})= \omega_T = \omega_0 \cdot \prod_{t=1}^{T}\frac{\omega_{t}}{\omega_{t-1}}.\label{2.1}
\end{align}
For each term in the product, we have \begin{align}
    \notag\frac{\omega_{t+1}}{\omega_t} &= \sum_{k \in [K]} \frac{\exp(-\eta \hat S_{t-1,k})}{\omega_{t-1}} \cdot \exp (-\eta \hat B_t(k))\\
    &= \sum_{k \in [K]}\tilde q_t(k) \exp (-\eta \hat B_t(k)),\label{2.2}
\end{align}
where the second equality holds because of the definition of $\tilde q_t$.
For any $\eta \leq \lambda_0 \gamma^2$,  Lemma \ref{lemma1} presents $|\eta \hat{B}_t(k)| \leq 1$. Thus, using the basic inequality $\exp(x) \leq 1 + x + x^2/2$ when $x \leq 1$, and $\exp (x) \geq 1 + x$, we have
\begin{align}
    \notag\frac{\omega_{t+1}}{\omega_t} &\leq 
    \sum_{k \in [K]}\tilde q_t(k) \left(1-\eta \hat B_t(k) + \eta^2 \hat B_t^2 (k)\right)\\
    \notag&= 1 -\eta \sum_{k \in [K]} \tilde q_t(k)\hat B_t(k) + \eta^2 \sum_{k \in [K]}\tilde q_t(k)\hat B_t^2(k)\\
    & \leq \exp \left(-\eta \sum_{k \in [K]} \tilde q_t(k)\hat B_t(k) + \eta^2 \sum_{k \in [K]}\tilde q_t(k)\hat B_t^2(k)\right).\label{2.3}
\end{align}
Combining \eqref{2.1}, \eqref{2.2} and \eqref{2.3}, we have 
\begin{align*}
    \exp(-\eta \hat S_{T,a}) \leq K \exp \left(\sum_{t=1}^T \left[-\eta \sum_{k \in [K]} \tilde q_t(k)\hat B_t(k) + \eta^2 \sum_{k \in [K]}\tilde q_t(k)\hat B_t^2(k)\right]\right),
\end{align*}
and therefore 
\begin{align*}
    \sum_{t=1}^T \hat B_t(a) - \sum _{t=1}^T \tilde q_t ^{\top} \hat B_t \leq \frac{\log K}{\eta} + \eta \sum_{t=1}^T\sum_{k \in [K]}\tilde q_t(k)\hat B_t^2(k). 
\end{align*}
Since $\tilde q_t = \frac{q_t-\gamma/K}{1-\gamma}$, we have
\begin{align*}
    (1-\gamma)\sum_{t=1}^T \hat B_t(a) - \sum _{t=1}^T  q_t ^{\top} \hat B_t \leq \frac{\log K}{\eta} + \eta \sum_{t=1}^T\sum_{k \in [K]}\tilde q_t(k)\hat B_t^2(k).
\end{align*}
Choosing $a = i^*$, changing the summation index to $i$ and taking expectation on both sides, we have
\begin{align*}
    (1-\gamma)\EE_{\cH_T}\sum_{t=1}^T \hat B_t(i^*) - \sum _{t=1}^T  \EE_{\cH_T}\left[q_t ^{\top} \hat B_t \right] \leq \frac{\log K}{\eta} + \EE_{\cH_T}\left[\eta \sum_{t=1}^T\sum_{i \in [K]} q_t(i)\hat B_t^2(i)\right].
\end{align*}
Substituting the above inequality into \eqref{eqn:regret-decomp} and using Lemma \ref{lemma2}, \ref{lemma3}, we can bound the regret as
    \begin{align*}
        \EE[R_T] &\leq 2 \gamma T + \frac{2\log K}{\eta} + 2 \eta\sum_{t=1}^T\EE_{\cH_T}\left[\sum_{i \in [K]}q_t(i)s_t(i)^2\right]\\
        & \leq 2\gamma T + 2\frac{\log K}{\eta} + \frac{2\eta d T}{\gamma}\\
        & \leq 2(\log K)^{1/3}d^{1/3}T^{2/3}\sqrt{1/\lambda_0} + 2(\log K)^{1/3} d^{1/3}T^{2/3} + 2(\log K)^{1/3} d^{1/3}T^{2/3}\sqrt{\lambda_0},
    \end{align*}
    where the second inequality holds due to Lemma \ref{Lemma:4}. In the last inequality, we put in our choice of parameters $\eta = (\log K)^{2/3}d^{-1/3}T^{-2/3}$ and $\gamma = \sqrt{\eta d /\lambda_0} = (\log K)^{1/3}d^{1/3}T^{-1/3} \lambda_0^{-1/2}$.
    This finishes our proof of Theorem \ref{Thm:adv}.
\end{proof}


\section{Detailed Explanation of Studied Algorithms in Experiments}\label{sec:exp-list}
The following list summarizes all methods we implemented:

\textsc{BETC-GLM(-Match)}: \cref{alg:BETC-GLM} proposed in this paper. 
For general link function, to find $\hat \btheta$ by MLE in \eqref{eqn:MLE}, 100 rounds of gradient descent are performed. The failure probability is set to $\delta = 1/T$. Parameters $\tau$ and $\epsilon$ are set to values listed in \cref{thm:BETC-GLM}. For \textsc{BETC-GLM-Match}, we use the $\tau$ and $\epsilon$ outlined in \cref{thm:BETC-GLM-matching}. 

\textsc{UCB-Borda}: The UCB algorithm~\citep{Auer2002FinitetimeAO} using \textit{Borda reduction} technique mentioned by \citet{BusaFekete2018PreferencebasedOL}. The complete listing is displayed in \cref{alg:UCB-Borda}. 

\textsc{DEXP3}: Dueling-Exp3 is an adversarial Borda bandit algorithm developed by \citet{Saha2021AdversarialDB}, which also applies to our stationary bandit case. Relevant tuning parameters are set according to their upper-bound proof. 

\textsc{ETC-Borda}: 
We devise a simple explore-then-commit algorithm, named \textsc{ETC-Borda}. Like \textsc{DEXP3}, \textsc{ETC-Borda} does not take any contextual information into account. 
The complete procedure of \textsc{ETC-Borda} is displayed in \cref{alg:ETC-Borda}, \cref{sec:ETC-Borda}. The failure probability $\delta$ is optimized as $1/T$.

\textsc{BEXP3}: The proposed method for adversarial Borda bandits displayed in \cref{alg:bexp3}. $\eta$ and $\gamma$ are chosen to be the value stated in \cref{Thm:adv}.

\section{Real-world Data Experiments}\label{sec:exp-realworld}
To showcase the performance of the algorithms in a real-world setting, we use EventTime dataset \citep{Zhang2016CrowdsourcedTA}. In this dataset, $K = 100$ historical events are compared in a pairwise fashion by crowd-sourced workers.

We first calculate the empirical preference probabilities $\tilde p_{i,j}$ from the collected responses. A visualized preferential matrix consisting of $\tilde p_{i,j}$ is shown in \cref{fig:ev_data_mat} in \cref{sec:data-vis}, which demonstrates that STI and SST conditions hardly hold in reality. 
During simulation, $\tilde p_{i,j}$ is the parameter of the Bernoulli distribution that is used to generate the responses whenever a pair $(i,j)$ is queried.
The contextual vectors $\bphi_{i,j}$ are generated randomly from $\{-1, +1\}^5$.
For simplicity, we assign the item pairs that have the same probability value with the same contextual vector, i.e., if $\tilde p_{i,j} = \tilde p_{k,l}$ then $\bphi_{i,j} = \bphi_{k,l}$. 
The MLE estimator $\hat \btheta$ in \eqref{eqn:MLE} is obtained to construct the recovered preference probability $\hat p_{i,j} := \mu( \bphi_{i,j}^{\top} \hat \btheta )$ where $\mu(x) = 1/(1+e^{-x})$ is the logistic function. We ensure that the recovered preference probability $\hat p_{i,j}$ is close to $\tilde p_{i,j}$, so that $\bphi_{i,j}$ are informative enough. 
As shown in \cref{fig:ev_data-app}, our algorithm outperforms the baseline methods as expected. In particular, the gap between our algorithm and the baselines is even larger than that under the generated hard case.  In both settings, our algorithms demonstrated a stable performance with negligible variance.

\begin{figure}[H]
    \centering
        \includegraphics[width=0.45\textwidth]{figures/events.pdf}
        \caption{EventTime}
        \label{fig:ev_data-app}
    \caption{The regret of the proposed algorithm (\textsc{BETC-GLM,BEXP3}) and the baseline algorithms (\textsc{UCB-Borda}, \textsc{DEXP3}, \textsc{ETC-Borda}).}
\end{figure}

\subsection{Data Visualization}\label{sec:data-vis}

The events in EventTime dataset are ordered by the time they occurred. In \cref{fig:ev_data_mat},  the magnitude of each $\tilde p_{i,j}$ is color coded. It is apparent that there is no total/consistent ordering (i.e., $\tilde p_{i,j} > \frac12 \Leftrightarrow i \succ j$) can be inferred from this matrix due to inconsistencies in the ordering and many potential paradoxes. Hence STI and SST can hardly hold in this case. 
\begin{figure}[H]
    \centering
    \includegraphics[width=0.45\textwidth]{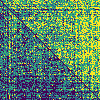}
    \caption{Estimated preferential matrix consists of $\tilde p_{i,j}$ from the EventTime dataset.}
    \label{fig:ev_data_mat}
\end{figure}

\section{Additional Information for Experiments}

\subsection{The \textsc{UCB-Borda} Algorithm}\label{sec:UCB-Borda}
The \textsc{UCB-Borda} procedure, displayed in \cref{alg:UCB-Borda} is a \textsc{UCB} algorithm with Borda reduction only capable of minimization of regret in the following form:
\begin{align*}
    \mathrm{Regret}(T) = \sum_{t=1}^{T} \big( B(i^*) - B(i_t) \big).
\end{align*}
Let $\nbb_i$ be the number of times arm $i \in [K]$ has been queried. Let $\wb_i$ be the number of times arm $i$ wins the duel. $\hat B(i)$ is the estimated Borda score.
$\alpha$ is set to 0.3 in all experiments.
\begin{algorithm}[H]
    \caption{\textsc{UCB-Borda}}
    \label{alg:UCB-Borda}
    \begin{algorithmic}[1]
        \STATE {\bfseries Input:} time horizon $T$, number of items $K$, exploration parameter $\alpha$.
        \STATE {\bfseries Initialize: } $\nbb = \wb = \{0\}^K$, $\hat B(i) = \frac12, i \in [K]$
        
        \FOR{$t = 1, 2, \dots, T$}
            \STATE $i_t = \argmax_{k \in [K]}
                \big(
                \hat B_k + 
                \sqrt{
                \frac{\alpha \log(t)}{\nbb_k}
                }
                \big)$
            \STATE sample $j_t \sim \mathrm{Uniform}([K])$
            \STATE query pair $(i_t, j_t)$ and receive feedback $r_t \sim \text{Bernoulli}(p_{i_t, j_t})$
            \STATE $\nbb_{i_t} = \nbb_{i_t} + 1$, $\wb_{i_t} = \wb_{i_t} + r_t$, $\hat B(i_t) = \frac{\wb_{i_t}}{\nbb_{i_t}}$
        \ENDFOR
    \end{algorithmic}
\end{algorithm}

\subsection{The \textsc{ETC-Borda} Algorithm}\label{sec:ETC-Borda}
The \textsc{ETC-Borda} procedure, displayed in \cref{alg:ETC-Borda} is an explore-then-commit type algorithm capable of minimizing the Borda dueling regret.
It can be shown that the regret of \cref{alg:ETC-Borda} is $\Tilde{O}(K^{1/3} T^{2/3})$.
\begin{algorithm}[H]
    \caption{\textsc{ETC-Borda}}
    \label{alg:ETC-Borda}
    \begin{algorithmic}[1]
        \STATE {\bfseries Input:} time horizon $T$, number of items $K$, target failure probability $\delta$
        \STATE {\bfseries Initialize: } $\nbb = \wb = \{0\}^K$, $\hat B(i) = \frac12, i \in [K]$
        
        \STATE Set $N = \lceil K^{-2/3}T^{2/3} \log(K/\delta)^{1/3} \rceil$
        \FOR{$t = 1, 2, \dots, T$}
            \STATE Choose action \label{line:action-i}
                $i_t \leftarrow \begin{cases}
                    1 + (t-1) \text{ mod } K, \text{ if } t \le KN, \\
                    \argmax_{i \in [K]} \hat{B}(i), \text{ if } t > KN.
                \end{cases}$
            \STATE Choose action \label{line:action-j}
                $j_t = \begin{cases}
                    \mathrm{Uniform}([K]), \text{ if } t \le KN, \\
                    \argmax_{i \in [K]} \hat{B}(i), \text{ if } t > KN.
                \end{cases}$   
            \STATE query pair $(i_t, j_t)$ and receive feedback $r_t \sim \text{Bernoulli}(p_{i_t, j_t})$
            \IF{$t \le N$}
            \STATE $\nbb_{i_t} = \nbb_{i_t} + 1$, $\wb_{i_t} = \wb_{i_t} + r_t$, $\hat B(i_t) = \frac{\wb_{i_t}}{\nbb_{i_t}}$
            \ENDIF
        \ENDFOR
    \end{algorithmic}
\end{algorithm}

\subsection{\textsc{Frank-Wolfe} algorithm used to find approximate solution for G-optimal design}\label{sec:fw}
In order to find a solution for the G-optimal design problem, we resort to the Frank-Wolfe algorithm to find an approximate solution. The detailed procedure is listed in \cref{alg:fw}. In Line~\ref{line:fw-wv}, each outer product costs $d^2$ multiplications, $K^2$ such matrices are scaled and summed into a $d$-by-$d$ matrix $\Vb(\pi)$, which costs $O(K^2d^2)$ operations in total. In Line~\ref{line:fw-maxij}, one matrix inversion costs approximately $O(d^3)$. The weighted norm requires $O(d^2)$ and the maximum is taken over $K^2$ such calculated values. The scaling and update in the following lines only require $O(K^2)$. In summary, the algorithm is dominated by the calculation in Line~\ref{line:fw-maxij} which costs $O(d^2K^2)$. 

In experiments, the G-optimal design $\pi(i,j)$ is approximated by running 20 iterations of Frank-Wolfe algorithm, which is more than enough for its convergence given our particular problem instance. (See Note 21.2 in \citep{Lattimore2020BanditA}). 

\begin{algorithm}[htb]
    \caption{\textsc{G-optimal design by Frank-Wolfe}}
    \label{alg:fw}
    \begin{algorithmic}[1]
    \STATE {\bfseries Input:} number of items $K$, contextual vectors $\bphi_{i,j}, i \in [K], j \in [K]$, number of iterations $R$
    \STATE {\bfseries Initialize: } $\pi_1(i,j) = 1 / K^2$
    \FOR{$r = 1, 2, \cdots, R$}
        \STATE $ \Vb(\pi_r) = \sum_{i,j} \pi_r(i,j) \bphi_{i,j} \bphi_{i,j}^{\top} $
        \label{line:fw-wv}
        \STATE $i_{r}^*,j_{r}^* = \argmax_{(i,j) \in [K] \times [K]} ||\bphi_{i,j}||_{\Vb(\pi_r)^{-1}} $
        \label{line:fw-maxij}
        \STATE $g_r = ||\bphi_{i_r^*,j_r^*}||_{\Vb(\pi_r)^{-1}} $
        \STATE $ \gamma_r = \frac{g_r-1 / d}{g_r - 1} $
        \STATE $\pi_{r+1}(i,j) = (1-\gamma_r) \pi_r(i,j) + \gamma_r \mathbbm{1}(i_{r}^* = i)\mathbbm{1}(j_{r}^* = j)$
    \ENDFOR
    \STATE {\bfseries Output:} Approximate G-optimal design solution $\pi_{R+1}(i,j)$
    \end{algorithmic}
\end{algorithm}




\bibliography{ref}
\bibliographystyle{ims}

\end{document}